%% file: paper.tex
\RequirePackage{fix-cm}
\documentclass[10pt,a4paper,logo]{paper}

\usepackage[all]{hypcap}
\usepackage[authoryear,round]{natbib}
\usepackage{relsize}
\usepackage{mathtools}
\usepackage{tikz}
\usepackage{centernot}
\usepackage{algorithm}
\usepackage{algpseudocode}
\usepackage{etoolbox}
\AtBeginEnvironment{algorithmic}{\small}
\usepackage{multirow}
\usepackage{threeparttable}
\usepackage{array,makecell}
\usepackage{placeins}
\usepackage{float}
\usepackage{wrapfig}
\usepackage{adjustbox}
\usepackage{anyfontsize}
\usepackage[capitalize,noabbrev]{cleveref}

\definecolor{title_blue}{HTML}{204899}
\definecolor{cite_blue}{HTML}{044dc1}
\definecolor{cite_purple}{HTML}{7406a7}
\hypersetup{
  colorlinks=true,
  citecolor=cite_blue,
  linkcolor=cite_purple,
  urlcolor=cite_purple,
  pdftitle={Diffusion Policy Improvement with Proposal-Conditioned Refinement Flows},
  pdfauthor={Junhyun Ha, Juho Lee, Byoungwoo Park}
}

\input{macro}

\makeatletter
\def\theHALG@line{\thealgorithm.\arabic{ALG@line}}
\patchcmd{\maketitle}
  {\@author \\ \footnotesize{\textbf{*}}Equal Contribution, \textsuperscript{$\dagger$}Corresponding Authors\par}
  {\@author\par}
  {}
  {\PackageError{preflow}{Could not remove template author legend}{Check paper.cls.}}
\makeatother
\renewcommand{\HorRule}{%
  \color{black}\makebox[\linewidth][l]{\rule{\textwidth}{1pt}}}

\makeatletter
\renewcommand\footnoterule{%
  \kern 15\p@
  \hrule \@width 2in \kern 2.6\p@
  \vspace{4pt}
}
\makeatother

\title{Diffusion Policy Improvement with Proposal-Conditioned Refinement Flows}
\author[1]{Junhyun~Ha}
\author[1]{Juho~Lee}
\author[1]{Byoungwoo~Park}

\affil[1]{KAIST}
\correspondingauthor{Correspondence to: \ccolor{\{junhyunha,~bw.park\}@kaist.ac.kr}}
\input{sections/1_abstract}

\begin{document}
\maketitle

\input{sections/2_introduction}
\input{sections/3_preliminaries}
\input{sections/4_method}
\input{sections/5_related_work}
\input{sections/6_experiments}
\FloatBarrier
\input{sections/7_conclusion}
\input{sections/8_statements}

\bibliographystyle{plainnat}
\bibliography{paper}

\clearpage
\appendix
\input{sections/9_appendix}

\end{document}

%% file: macro.tex
\definecolor{onlineblue}{rgb}{0,0.4609375,0.73046875}
\definecolor{offlinegray}{gray}{0.70}
\definecolor{cigray}{gray}{0.62}
\definecolor{arrowgray}{gray}{0.45}
\definecolor{oursbg}{RGB}{235,247,250}

\newcommand{\PReFlowbest}[1]{\textcolor{PReFlowBest}{\textbf{#1}}}
\newcommand{\PReFlowpair}[2]{%
  \makebox[1.50em][r]{\textcolor{PReFlowOffline}{#1}}%
  \makebox[0.82em][c]{\textcolor{PReFlowArrow}{$\scriptstyle\to$}}%
  \makebox[1.50em][l]{#2}%
}

\newcommand{\PReFlowpairCI}[4]{%
  \makecell[c]{%
    \makebox[\PReFlowResultWidth]{%
      \makebox[0.48\PReFlowResultWidth][c]{%
        {\fontsize{3.5}{4}\selectfont\textcolor{PReFlowOffline}{[#2]}}}%
      \hfill
      \makebox[0.48\PReFlowResultWidth][c]{%
        {\fontsize{3.5}{4}\selectfont\textcolor{PReFlowOffline}{[#4]}}}}%
    \\[-1pt]\PReFlowpair{#1}{#3}}%
}
\newcommand{\PReFlowgroup}[1]{%
  \addlinespace[3pt]
  \multicolumn{12}{@{}l@{}}{%
    \makebox[\linewidth][l]{%
      {\fontsize{7.1}{8.5}\selectfont\bfseries\scshape #1}%
      \enspace{\color{black!35}\hrulefill}}}\\[-1pt]
} 

\definecolor{onlineblue}{RGB}{0,94,158}
\definecolor{offlinegray}{gray}{0.43}
\definecolor{arrowgray}{gray}{0.58}
\definecolor{groupbg}{gray}{0.965}
\definecolor{allbg}{gray}{0.955}
\definecolor{oursbg}{RGB}{235,247,250}
\definecolor{oursallbg}{RGB}{220,240,245}

\definecolor{onlineblue}{RGB}{0,94,158}
\definecolor{offlinegray}{gray}{0.50}
\definecolor{arrowgray}{gray}{0.48}
\definecolor{allbg}{gray}{0.965}
\definecolor{oursbg}{RGB}{232,246,250}
\definecolor{oursallbg}{RGB}{216,239,245}

\input{math.tex}

\input{math_commands.tex}

\ifcsname lemma\endcsname\else

\fi

\crefname{assumption}{Assumption}{Assumptions}
\Crefname{assumption}{Assumption}{Assumptions}

\DeclareMathOperator{\sg}{sg}

\newcommand{\cA}{\mathcal{A}}
\newcommand{\dd}{\mathrm{d}}
\newcommand{\N}{\mathcal{N}}

\sethlcolor{background}
\DeclareRobustCommand{\best}[1]{\hl{#1}}

%% file: math.tex
\usepackage{xcolor,soul}
\usepackage{colortbl}

\newcommand{\calN}{{\mathcal{N}}}

\def\[#1\]{\begin{align}#1\end{align}}

\theoremstyle{plain}
\newtheorem{theorem}{Theorem}[section]
\newtheorem{proposition}[theorem]{Proposition}
\newtheorem{lemma}[theorem]{Lemma}

\def\[#1\]{\begin{align}#1\end{align}}

\definecolor{myellow}{RGB}{194, 125, 47}
\definecolor{mgreen}{RGB}{48, 160, 111}

\definecolor{mteal}{RGB}{221, 254, 242}
\definecolor{bgteal}{RGB}{236, 245, 245}
\definecolor{mpurple}{RGB}{120, 111, 177}
\definecolor{citationcolor}{RGB}{80, 90, 180}

\newcommand{\ccolor}[1]{{\color{citationcolor}{#1}}}

\usepackage{thmtools}
\usepackage{thm-restate}
\declaretheoremstyle[
  headfont=\bfseries,
  bodyfont=\normalfont,
  spaceabove=6pt, spacebelow=6pt,
  headpunct={.},
  postheadspace=1em
]{boldrestatable}

\definecolor{mygray}{gray}{0.95}

\newcommand{\graybox}[1]{%
  \begingroup
  \setlength{\fboxsep}{4pt}%
  \noindent\colorbox{mygray}{%
    \begin{minipage}{\dimexpr\linewidth-2\fboxsep\relax}%
      \setlength{\abovedisplayskip}{0pt}%
      \setlength{\belowdisplayskip}{0pt}%
      \setlength{\abovedisplayshortskip}{0pt}%
      \setlength{\belowdisplayshortskip}{0pt}%
      #1%
    \end{minipage}%
  }%
  \par
  \endgroup
}

\definecolor{mmgreen}{RGB}{243, 247, 243}
\definecolor{mmpurple}{RGB}{246, 242, 247}
\definecolor{mmccolor}{RGB}{235, 242, 250}

\definecolor{citationcolor}{RGB}{80, 90, 180}
\definecolor{background}{RGB}{240, 240, 250}
\definecolor{bggreen}{RGB}{229,242,229}
\newcommand{\cellbg}{\cellcolor{background}}

\DeclareRobustCommand{\best}[1]{\hl{#1}}

%% file: math_commands.tex
\usepackage{amsmath,amsfonts,bm}

\def\eqref#1{(\ref{#1})}

\def\1{\bm{1}}

\newcommand{\tr}{\mathrm{tr}}

\DeclareMathAlphabet{\mathsfit}{\encodingdefault}{\sfdefault}{m}{sl}
\SetMathAlphabet{\mathsfit}{bold}{\encodingdefault}{\sfdefault}{bx}{n}

\newcommand{\E}{\mathbb{E}}



%% file: sections/1_abstract.tex
\begin{abstract}
Diffusion and flow policies can model complex behaviors in offline reinforcement learning (RL). 
However, penalizing their KL divergence from the behavior policy can discourage actions having high critic values with low behavior density. 
Directly refining behavior proposals may be an alternative, yet Gaussian or deterministic editors limit expressiveness to represent multiple separated modes for the same proposal. 
In this work, we introduce Proposal-Conditioned Refinement Flows
(PReFlow), a policy extraction method combining critic-based proposal selection with a conditional refinement flow. 
To optimize proposal selection and refinement
together, we formulate a KL-regularized objective whose optimum induces a Gibbs policy over final actions under a Gaussian-smoothed behavior prior. 
The refinement flow can represent multiple high value modes, while a proposal-centered Gaussian reference regulates large action changes. This Gaussian reference further enables us to make use of simulation-free, closed form adjoint matching targets from sampled endpoints and critic gradients, yielding a single velocity regression loss without
a backward adjoint solve. 
On 50 OGBench tasks, PReFlow achieves competitive offline performance and the highest aggregate score among the compared methods after online fine-tuning, reaching 91\% after 500K environment steps.
\end{abstract}

%% file: sections/2_introduction.tex
\section{Introduction}

Diffusion and flow models provide expressive policy classes for offline reinforcement learning (RL), capable of representing complex, multimodal action distributions~\citep{diffusionql,idql,fql}. The central challenge is to improve such policies using a learned critic while avoiding out-of-distribution actions, where value estimates can be unreliable~\citep{bcq,cql}. A common approach is behavior regularized policy improvement, which balances critic value against deviation from the behavior policy. QAM~\citep{qam} optimizes an expressive flow policy under KL regularization. However, this can restrict improvement when high value actions have low behavior density, even if they lie close to likely behavior actions.

Candidate-based policy extraction methods offer another approach, using the critic to select promising actions sampled from the behavior policy \citep{emaq,idql}. But their output is restricted to the candidates, even when better actions lie nearby. This suggests treating the selected action as a proposal that can be further refined toward higher-value actions.

Residual policies learn additive refinements to base actions~\citep{rpl,residualrl,policydecorator}, where multiple distinct refinements may improve a given state and proposal.    
However, existing residual policies based on Gaussian~\citep{expo,qam} or deterministic~\citep{deflow,fidec} refinements offer limited flexibility.
SPAR~\citep{spar} supports multimodal refinements through a conditional VAE,
but requires residual reconstruction and latent
self-imitation.
This leads to the following question:

\graybox{%
  \begin{center}
  \textit{How can we make local action refinement both expressive and efficient to learn?}
  \end{center}
}

To answer this question, we introduce
\textbf{Proposal-Conditioned Refinement Flows (PReFlow)},
a policy extraction method that models additive action refinements
with a conditional flow.
A behavior flow generates proposals, and the critic selects
the one with the highest value.
Conditioned on the state and selected proposal, the flow samples a refinement, which is
added to the proposal to produce the final action
(Figure~\ref{fig:method_result}).
We regularize the refined-action distribution toward a Gaussian
centered at the proposal.
This reference discourages large action changes without restricting
the refinement distribution to a Gaussian family, allowing distinct
high value refinements of the same proposal.

\begin{figure}[!t]
    \centering
    \includegraphics[width=0.98\linewidth]
    {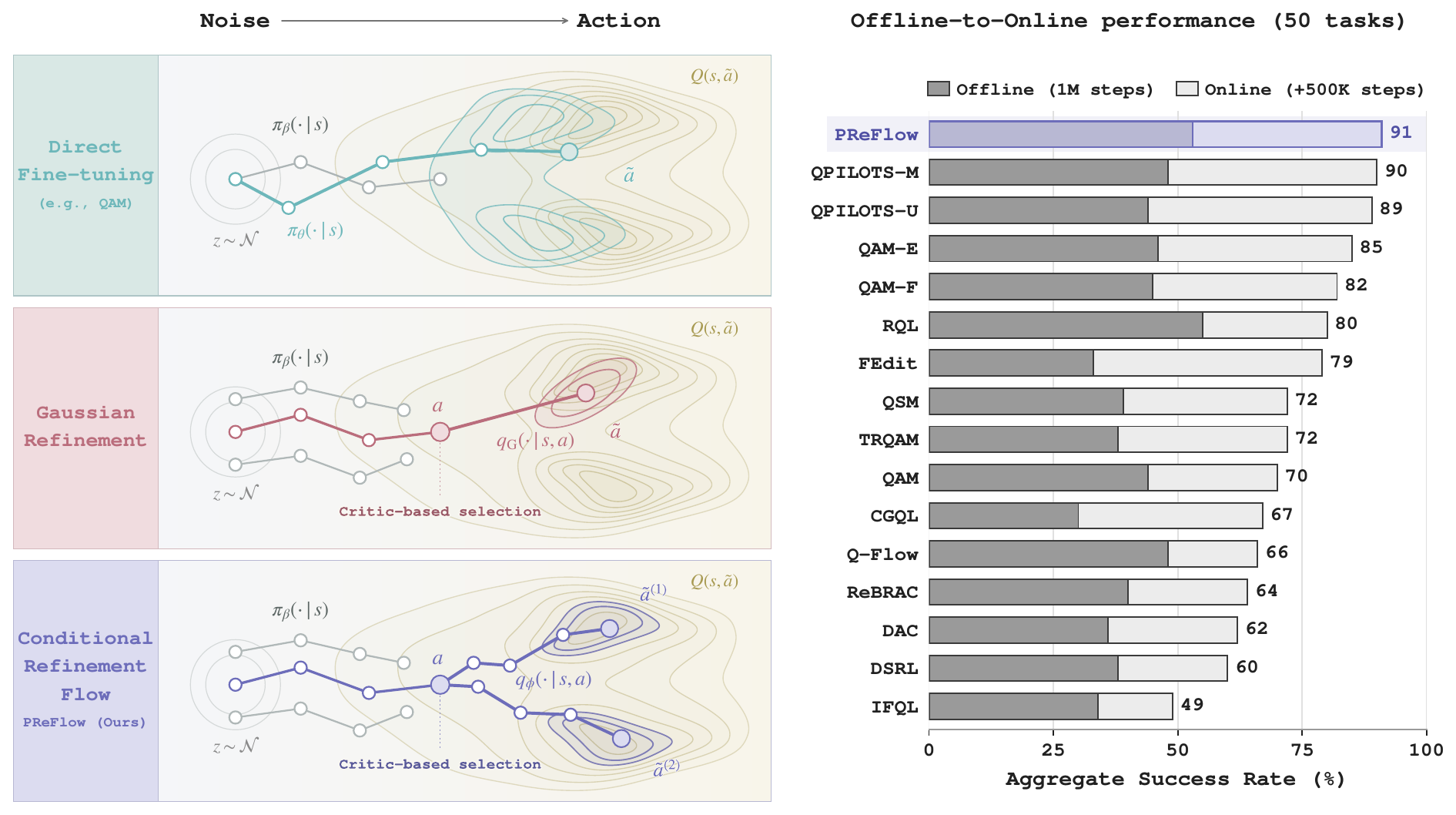}
    \caption{
\textbf{PReFlow: proposal selection and stochastic refinement.}
A behavior flow generates proposals, the critic selects one, and a
conditional flow refines it to produce final action $\tilde{a}=a+\Delta a$.
The performance panel reports mean success rates across the 50 OGBench tasks
(Table~\ref{tab:main_combined}).
}
    \label{fig:method_result} 

\end{figure} 

We formulate a joint KL regularized objective for proposal selection and conditional refinement,
whose optimum induces a Gibbs policy under a Gaussian-smoothed behavior prior. The Gaussian
refinement reference also enables closed form adjoint matching targets~\citep{eam, adjointsampling}. Given refinement endpoints from forward rollouts of the current flow, we construct these
targets without further simulation and train the conditional flow with a single velocity regression loss.

We evaluate PReFlow on 50 OGBench tasks~\citep{ogbench}, comparing it
against a broad range of policy extraction methods for flow and
diffusion policies. PReFlow achieves the highest aggregate
offline-to-online performance among the compared methods while
remaining competitive in offline RL. Our controlled ablations
further show gains from expressive flow refinement over Gaussian
editing, improvements even with a single proposal, and additional
benefits from proposal selection.


%% file: sections/3_preliminaries.tex
\section{Preliminaries}
\label{sec:prelim}

We introduce the problem setting and the background underlying our conditional refinement method. 

\begin{sloppypar}
\paragraph{Behavior regularized policy extraction.}
We consider a Markov decision process (MDP)
$\mathcal M=(\mathcal S,\mathcal A,P,\gamma,R,\mu)$, where $\mathcal S$ is
the state space, $\mathcal A=\mathbb R^{A}$ is the
action space, $P:\mathcal S\times\mathcal A\to\Delta\mathcal S$ is the
transition function, $\gamma\in[0,1)$ is the discount factor,
$R:\mathcal S\times\mathbb R^{A}\to\mathbb R$ is the reward function, and
$\mu\in\Delta\mathcal S$ is the initial state distribution. The goal of Offline RL is to learn a policy $\pi_\theta:\mathcal S\to\Delta\mathcal A$
that maximizes the expected discounted return
$\mathbb E
\bigl[\sum_{k=0}^{\infty}\gamma^kR(s_k,a_k)\bigr]$
from a dataset $\mathcal D=\{(s_i,a_i,s_i',r_i)\}_{i=1}^{|\mathcal D|}$
collected by behavior policy $\pi_\beta$.
Offline-to-Online (O2O) RL further aims to fine-tune this policy with limited online
interactions.
Given a learnable critic $Q_\psi:\mathcal S\times\mathcal A\to\mathbb R$,
we formulate policy extraction with KL regularization
toward the behavior policy
\begin{equation}
    \max_{\pi}\;
    \mathbb E_{\tilde a\sim\pi(\cdot\mid s)}\bigl[Q_\psi(s,\tilde a)\bigr]
    -\frac{1}{\tau(s)}\,\operatorname{KL}(\pi\,\|\,\pi_\beta),
    \label{eq:behavior regularized-objective}
\end{equation}
where $\tau(s)>0$ controls the critic value relative to the KL penalty~\citep{awac}. Optimizing over action distributions gives the Gibbs policy, which favors actions with higher critic values
\begin{equation}
    \pi^\star(\tilde a\mid s)\propto
    \pi_\beta(\tilde a\mid s)\,e^{\tau(s)Q_\psi(s,\tilde a)}.
    \label{eq:gibbs-policy}
\end{equation}
\end{sloppypar}

\paragraph{Flow matching.}
To learn the reference $\pi_\beta$ in~\eqref{eq:behavior regularized-objective}, we fit a velocity field $f_\beta$ to $\mathcal D$. This velocity field transforms Gaussian noise into a behavior action through an ordinary differential equation (ODE), 
\begin{equation}
\mathrm d X_t
=f_\beta(X_t,t\mid s) \mathrm d t, \quad X_0 \sim \mathcal{N}(0, I_{A}), \quad X_1\sim\pi_\beta(\cdot \mid s).
\label{eq:fm-ode}
\end{equation}
To train $f_\beta$, we pair a dataset action $a$ at state $s$ with Gaussian noise $x_0$ and form the linear interpolation $x_t=(1-t)x_0+ta$. Its velocity provides the regression target in the flow matching loss~\citep{lipman2023flow,liu2023flow,albergo2023building} 
\begin{equation}
\mathcal L_{\mathrm{FM}}(\beta)
=
\mathbb E_{t\sim\mathcal U(0,1),\, x_0\sim\mathcal N,\,
(s,a)\sim \mathcal D
}
\left[
\left\|
f_\beta(x_t,t \mid s)-(a-x_0)
\right\|_2^2
\right],
\label{eq:flow-matching}
\end{equation}

\paragraph{Q-learning with adjoint matching.}
QAM~\citep{qam} learns a flow $f_\theta$
to approximate the Gibbs policy~\eqref{eq:gibbs-policy}.
Using $f_\theta$ as the ODE velocity in~\eqref{eq:fm-ode}
defines the policy $\pi_\theta$ and final action $\tilde a$,
\begin{equation}
    \tilde a=\mathrm{ODE}_{f_\theta}(\varepsilon;s)
    \sim\pi_\theta(\cdot\mid s),
    \quad \varepsilon\sim\mathcal N(0,I_A).
\end{equation}
Here $\mathrm{ODE}_{f}(\varepsilon;s)$ denotes the endpoint
obtained by integrating the flow $f$ from $\varepsilon$.
To train $f_\theta$, QAM uses adjoint matching (AM)~\citep{am},
with $f_\beta$ fixed as the reference during each policy update.
To construct regression targets for this update,
AM samples trajectories $\mathbf X=\{X_t\}_{t\in[0,1]}$
from a stochastic differential equation (SDE).
For a fixed state $s$, the SDE uses the current flow $f_\theta$
and the memoryless noise schedule $\sigma_t=\sqrt{2(1-t)/t}$,
\begin{equation}
    \mathrm dX_t=\Bigl(2f_\theta(X_t,t\mid s)-X_t/t\Bigr)\mathrm dt
    +\sigma_t\,\mathrm dB_t,
    \qquad X_0\sim\mathcal N(0,I_A),
    \label{eq:memoryless-sde}
\end{equation}
where $B_t$ is standard Brownian motion.
Along each trajectory, AM computes the lean adjoint
$\lambda(\mathbf X,t)$
by propagating the endpoint critic gradient backward
using the Jacobian of the reference drift,
\begin{equation}
    \mathrm d\lambda(\mathbf X,t)
    =-\nabla_{X_t}\left[2f_\beta(X_t,t\mid s)-X_t/t\right]\lambda(\mathbf X,t)\,\mathrm dt,
    \quad
    \lambda(\mathbf X,1)=-\tau(s)\nabla_{X_1}Q_\psi(s,X_1).
    \label{eq:lean-adjoint}
\end{equation}
The lean adjoint defines the velocity target
$f_\beta-\sigma_t^2\lambda/2$ for $f_\theta$,
yielding the regression loss
\begin{equation}
    \mathcal L_{\mathrm{AM}}(\theta)
    =\mathbb E_{\mathbf X}\Bigl[\int_0^1
    \Bigl\|2\bigl(f_\theta(X_t,t\mid s)-f_\beta(X_t,t\mid s)\bigr)/\sigma_t
    +\sigma_t\lambda(\mathbf X,t)\Bigr\|_2^2\,\mathrm dt\Bigr].
    \label{eq:adjoint matching}
\end{equation}
Because~\eqref{eq:lean-adjoint} depends on the Jacobian of $f_\beta$,
constructing these targets involves a sequential backward pass
of vector--Jacobian products along the entire sampled SDE trajectory.

%% file: sections/4_method.tex
\section{Policy Improvement with Conditional Refinement Flows}
\label{sec:method}

In this section, we introduce PReFlow, which learns local value-guided refinements conditioned on proposals from the behavior policy. 
We motivate this formulation and develop a joint regularized
objective for proposal selection and refinement.
From this objective, we derive simulation-free adjoint matching
targets to train the flow.
Proofs and technical details are provided in
Appendix~\ref{app:proofs}.

\subsection{Towards Expressive Conditional Refinement}
\label{sec:refinement}

\paragraph{Behavior regularized flow policies.}
The goal of policy extraction~\eqref{eq:behavior regularized-objective}
is to generate final actions from
the Gibbs policy $\pi^\star$ in~\eqref{eq:gibbs-policy}.
QAM~\citep{qam} approximates this target by optimizing $f_\theta$
with adjoint matching~\eqref{eq:adjoint matching},
so the learned policy directly generates
$\tilde a\sim\pi_\theta(\cdot\mid s)$.
The KL penalty introduces the behavior-density factor
$\pi_\beta(\tilde a\mid s)$ into the Gibbs target.
Low behavior density can therefore reduce the weight of high value actions
even when they lie close to behavior actions.
This motivates a conditional approach
that regularizes action changes relative to each behavior proposal.

\paragraph{Existing action editors.}
At state $s$, the behavior flow $f_\beta$
trained with the flow matching loss~\eqref{eq:flow-matching}
generates a proposal $a$
by integrating the ODE~\eqref{eq:fm-ode}.
With $a$ fixed, an action editor learns the conditional distribution
$\pi^{\mathrm{ref}}(\cdot \mid s,a)$
of an additive refinement $\Delta a$.
The final action is generated in two stages,
\begin{equation}
    a\sim\pi_\beta(\cdot\mid s),
    \quad
    \Delta a\sim\pi^{\mathrm{ref}}(\cdot\mid s,a),
    \quad
    \tilde a=a+\Delta a.
\end{equation}
Because multiple distinct refinements of the same proposal can yield high critic values at a given state, the conditional policy can benefit from representing multiple modes.

For example, Gaussian-based editors have been proposed to model stochastic refinements~\citep{expo}, while deterministic editors have been developed to map each state and proposal $(s,a)$ to a single refinement~\citep{deflow, fidec}. 
Both parameterizations might restrict the refinement distribution $\pi^{\mathrm{ref}}(\cdot | s, a)$ to be a single mode, limiting its
flexibility in capturing distinct refinement modes. A multimodal refinement policy based on a conditional VAE has been proposed to address this limitation~\citep{spar}, but its training combines weighted residual reconstruction with an additional latent self-imitation objective. To retain this expressiveness while simplifying training, we seek a conditional refinement policy learned with a single regression objective.

\paragraph{Proposal-conditioned refinement flows.}
To this end, we introduce PReFlow, which refines a behavior proposal $a \sim \pi_\beta(\cdot|s)$ using an expressive conditional flow. We train the refinement flow policy $\pi^{\mathrm{ref}}(\cdot|s,a)$
to maximize the expected critic value of the refined action $\tilde a$, with KL regularization toward the Gaussian reference $\rho_\delta=\calN(0,\delta^2 I)$,
where $\delta>0$ sets the reference refinement scale.
Equivalently, the distribution of $\tilde a=a+\Delta a$
is regularized toward $\calN(a,\delta^2 I)$,
therefore the proposal $a$ serves as both
an input to the conditional flow
and the center of the regularization reference.
This reference discourages large action changes
without restricting $\pi^{\mathrm{ref}}$ to a Gaussian family,
allowing multiple high value refinements of the same proposal as depicted in Fig.~\ref{fig:toy_qvpo}.

\begin{figure}[!t]
    \centering
    \includegraphics[width=0.98\linewidth]
    {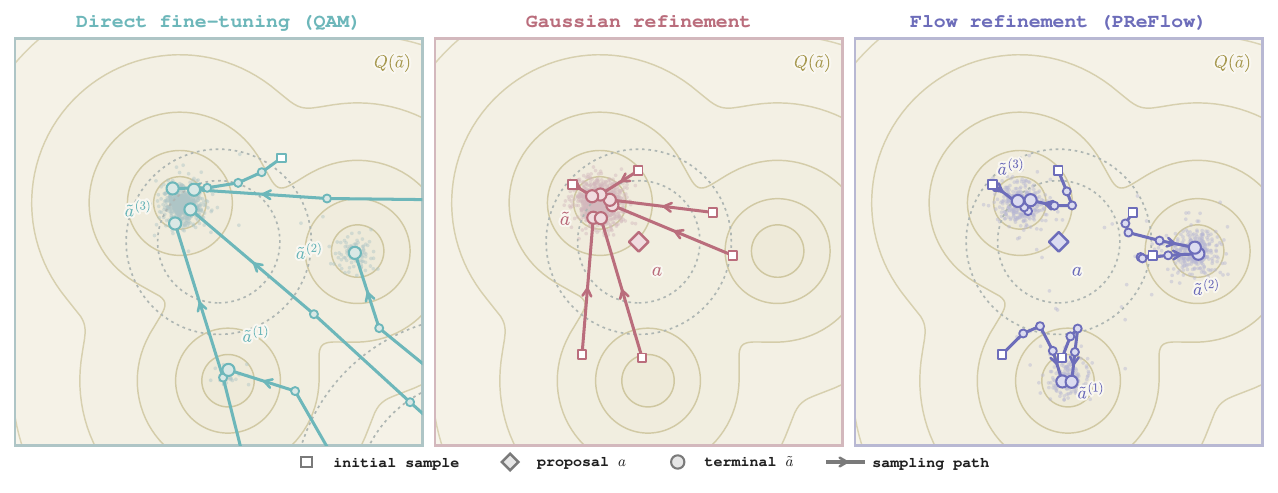}
    \caption{
\textbf{Toy comparison of policy extraction methods.}
Fine-tuning the full flow (QAM) undercovers the two modes farther from
the proposal, and Gaussian refinement concentrates on a single mode.
PReFlow captures all three modes. Appendix~\ref{app:toy-edit-family}
provides details of the toy experiment.
}
    \label{fig:toy_qvpo} 

\end{figure}

The Gaussian reference also simplifies training
by admitting a reference process with closed form
intermediate-state distributions and lean adjoints~\citep{am, eam}.
Given refinement endpoints sampled from the current flow,
these expressions enable simulation-free construction
of regression targets from Gaussian bridge samples
and endpoint critic gradients.
The resulting single regression objective requires
neither a backward adjoint solve
nor an additional residual-reconstruction loss.
We formalize the conditional objective
and use its optimal value to guide proposal selection
in the next section,
and then derive the regression objective
in Section~\ref{sec:am}.

\subsection{A Regularized Objective for Conditional Policy Improvement}
\label{sec:lift}

Starting from policy extraction objective~\eqref{eq:behavior regularized-objective}, we first derive a refinement objective for $\Delta a$ conditioned on a fixed behavior proposal $a$ and then use its optimal value to guide proposal selection.

\paragraph{Conditional refinement objective.}
For state $s$, we sample a proposal $a\sim\pi_\beta(\cdot\mid s)$ from the behavior flow and hold it fixed during refinement.
We optimize $\pi_a(\cdot\mid s)$, the conditional distribution of the refined action $\tilde a=a+\Delta a$.
To regularize action changes relative to $a$, we replace the behavior reference in~\eqref{eq:behavior regularized-objective} with $\mathcal N(a,\delta^2I)$, as motivated in Section~\ref{sec:refinement}.
For $\tau=\tau(s)>0$, the objective is
\begin{equation}
    \mathbb E_{\tilde a\sim\pi_a(\cdot\mid s)}
    [Q_\psi(s,\tilde a)]
    -\frac{1}{\tau}
    \operatorname{KL}\!\left(
        \pi_a(\cdot\mid s)
        \,\|\,\mathcal N(a,\delta^2I)
    \right).
    \label{eq:behavior-proposal}
\end{equation}
With proposal $a$ fixed, writing $\Delta a=\tilde a-a$ gives $\pi_a(a+\Delta a\mid s)=\pi^{\mathrm{ref}}(\Delta a\mid s,a)$.
Since the reference for $\tilde a$ is $\mathcal N(a,\delta^2I)$, the corresponding reference for $\Delta a$ is $\rho_\delta=\mathcal N(0,\delta^2I)$.
By the translation invariance of KL divergence, we can rewrite the refinement objective in~\eqref{eq:behavior-proposal} as
\begin{equation}
    \mathcal J_{s,a}(\pi^{\mathrm{ref}})
    =
    \mathbb E_{\Delta a\sim\pi^{\mathrm{ref}}(\cdot\mid s,a)}
    [Q_\psi(s,a+\Delta a)]
    -
    \tfrac{1}{\tau}
    \operatorname{KL}\!\left(
        \pi^{\mathrm{ref}}(\cdot\mid s,a)
        \,\|\,\rho_\delta
    \right).
    \label{eq:conditional-objective}
\end{equation}

Applying the Gibbs solution~\eqref{eq:gibbs-policy} to~\eqref{eq:conditional-objective} gives
\begin{equation}
    \pi^{\mathrm{ref},\star}(\Delta a\mid s,a)
    \propto
    \exp\!\left(
        \tau Q_\psi(s,a+\Delta a)
        -\frac{\|\Delta a\|^2}{2\delta^2}
    \right).
    \label{eq:refinement-target}
\end{equation}
The target balances critic value against refinement magnitude
without directly penalizing low behavior density.
It can be multimodal despite its Gaussian reference
when the critic favors distinct refinements.

To provide intuition for this optimality, we further define $Z(s,a)=\mathbb E_{\Delta a\sim\rho_\delta}[e^{\tau Q_\psi(s,a+\Delta a)}]$, the partition function of the optimal refinement policy.
Then, we can rewrite the objective~\eqref{eq:conditional-objective} as
\begin{equation}
    \mathcal J_{s,a}(\pi^{\mathrm{ref}})
    =\frac{1}{\tau}\log Z(s,a)
    -\frac{1}{\tau}\operatorname{KL}\!\left(
        \pi^{\mathrm{ref}}\,\|\,\pi^{\mathrm{ref},\star}
    \right).
\end{equation}
The first term is independent of $\pi^{\mathrm{ref}}$, and the KL divergence reaches its minimum at $\pi^{\mathrm{ref}}=\pi^{\mathrm{ref},\star}$.

\paragraph{Joint refinement and proposal selection.}
We have optimized $\pi^{\mathrm{ref}}$ for a fixed proposal $a$.
Since the optimal value of~\eqref{eq:conditional-objective} depends on the proposal
and balances critic value against the KL penalty,
we now optimize proposal selection with refinement.
We denote this optimal conditional value by
\begin{equation}
    V_\delta(s,a)
    =
    \sup_{\pi^{\mathrm{ref}}}
    \mathcal J_{s,a}(\pi^{\mathrm{ref}})
    =\frac{1}{\tau}\log Z(s,a).
    \label{eq:refinement-value}
\end{equation}
Let $\pi^{\mathrm{base}}(\cdot\mid s)$ denote the distribution of selected proposals.
To evaluate selection and refinement jointly, we average $\mathcal J_{s,a}(\pi^{\mathrm{ref}})$ over $a\sim\pi^{\mathrm{base}}(\cdot\mid s)$.
Since proposals come from $\pi_\beta$, we regularize $\pi^{\mathrm{base}}$ toward it via KL penalty.
Using the same coefficient $1/\tau$ for both stages gives the joint objective
\begin{equation}
    \mathcal J_s(\pi^{\mathrm{base}},\pi^{\mathrm{ref}})
    =
    \mathbb E_{a\sim\pi^{\mathrm{base}}(\cdot\mid s)}
    [\mathcal J_{s,a}(\pi^{\mathrm{ref}})]
    -
    \tfrac{1}{\tau}
    \operatorname{KL}\!\left(
        \pi^{\mathrm{base}}(\cdot\mid s)
        \,\|\,\pi_\beta(\cdot\mid s)
    \right).
    \label{eq:joint-objective}
\end{equation}

With optimal conditional refinement, the first term becomes $\mathbb E_{a\sim\pi^{\mathrm{base}}}[V_\delta(s,a)]$.
Applying the same Gibbs solution with $V_\delta$ in place of $Q_\psi$ therefore gives the optimal proposal-selection policy
\begin{equation}
    \pi^{\mathrm{base},\star}(a\mid s)
    \propto
    \pi_\beta(a\mid s)\exp\!\left(\tau V_\delta(s,a)\right)
    =\pi_\beta(a\mid s)Z(s,a).
    \label{eq:proposal-target}
\end{equation}
This policy favors proposals according to their value after regularized refinement.

\paragraph{Optimal two-stage policy.}
We now relate this two-stage policy to the final action target in~\eqref{eq:gibbs-policy}.
Using the common coefficient $1/\tau$, the KL chain rule combines the two penalties into a joint KL penalty relative to $\pi_\beta(a\mid s)\rho_\delta(\Delta a)$.
This reference first draws a behavior proposal and then adds an independent Gaussian refinement.
Its final action distribution is therefore
$\bar\pi_{\beta,\delta}(\tilde a\mid s)
    =\int\pi_\beta(a\mid s)\rho_\delta(\tilde a-a)\,\mathrm da.$
This Gaussian-smoothed behavior prior averages Gaussian neighborhoods centered at behavior proposals.
The joint optimum reweights the reference by $e^{\tau Q_\psi(s,\tilde a)}$.
Since this weight depends only on the final action, integrating out the proposal gives the following result.

\begin{proposition}[Final policy under proposal-relative refinement]
\label{prop:lift}
Jointly optimizing~\eqref{eq:joint-objective} over all proposal selection and conditional refinement policies yields the final action policy

\graybox{
\begin{equation}
    \pi_\delta^\star(\tilde a\mid s)
    \propto
    \bar\pi_{\beta,\delta}(\tilde a\mid s)e^{\tau Q_\psi(s,\tilde a)}.
    \label{eq:final-policy}
\end{equation}
}
\end{proposition}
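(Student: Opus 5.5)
We first solve the inner problem exactly and then the outer one, and read off the induced law of $\tilde a=a+\Delta a$. Fix $s$ and write $\tau=\tau(s)$. We assume $Z(s,a)<\infty$ for $\pi_\beta$-almost every $a$ and $\int\pi_\beta(a\mid s)Z(s,a)\,\mathrm da<\infty$; this holds, for instance, if $Q_\psi$ is bounded above. These conditions ensure that all Gibbs normalizers are finite.

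\emph{Step 1: optimal refinement.} For each fixed proposal $a$, the identity $\mathcal J_{s,a}(\pi^{\mathrm{ref}})=\frac1\tau\log Z(s,a)-\frac1\tau\operatorname{KL}(\pi^{\mathrm{ref}}\|\pi^{\mathrm{ref},\star})$ shows that the supremum equals $V_\delta(s,a)$. It is attained uniquely, up to null sets, by \eqref{eq:refinement-target}, that is,
\begin{equation*}
\pi^{\mathrm{ref},\star}(\Delta a\mid s,a)=\rho_\delta(\Delta a)\,e^{\tau Q_\psi(s,a+\Delta a)}/Z(s,a).
\end{equation*}
Since the refinement policy may depend on $a$ arbitrarily, maximizing \eqref{eq:joint-objective} over $\pi^{\mathrm{ref}}$ for fixed $\pi^{\mathrm{base}}$ can be done pointwise in $a$. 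This reduces the joint objective to
\begin{equation*}
\mathbb E_{\pi^{\mathrm{base}}}[V_\delta(s,a)]-\tfrac1\tau\operatorname{KL}(\pi^{\mathrm{base}}\|\pi_\beta).
\end{equation*}
Here we must check that restricting to measurable selections costs nothing. This holds because the explicit maximizer above is jointly measurable in $(a,\Delta a)$.

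\emph{Step 2: optimal selection.} The reduced objective has exactly the form \eqref{eq:behavior regularized-objective}, with $V_\delta$ in place of $Q_\psi$. We use the same variational identity, now with normalizer $\int\pi_\beta Z\,\mathrm da$. This yields \eqref{eq:proposal-target}:
\begin{equation*}
\pi^{\mathrm{base},\star}(a\mid s)=\pi_\beta(a\mid s)Z(s,a)/C(s),\qquad C(s)=\int\pi_\beta(a\mid s)Z(s,a)\,\mathrm da.
\end{equation*}
As a cross-check, the KL chain rule gives
\begin{equation*}
\mathcal J_s=\mathbb E\,Q_\psi-\tfrac1\tau\operatorname{KL}\bigl(\pi^{\mathrm{base}}\pi^{\mathrm{ref}}\,\|\,\pi_\beta\rho_\delta\bigr).
\end{equation*}
This is a single Gibbs problem on $(a,\Delta a)$, whose optimum is the joint density $\propto\pi_\beta(a\mid s)\rho_\delta(\Delta a)e^{\tau Q_\psi(s,a+\Delta a)}$. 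It agrees with the product of the two factors found above.

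\emph{Step 3: marginalize.} The joint optimal density is the product
\begin{equation*}
\frac{\pi_\beta(a\mid s)Z(s,a)}{C(s)}\cdot\frac{\rho_\delta(\Delta a)e^{\tau Q_\psi(s,a+\Delta a)}}{Z(s,a)}.
\end{equation*}
The factors $Z(s,a)$ cancel, leaving $\pi_\beta(a\mid s)\rho_\delta(\Delta a)e^{\tau Q_\psi(s,a+\Delta a)}/C(s)$. We change variables $(a,\Delta a)\mapsto(a,\tilde a)$, which has unit Jacobian, and integrate out $a$. Because the exponential weight depends only on $\tilde a$, it factors out of the integral. What remains is $\int\pi_\beta(a\mid s)\rho_\delta(\tilde a-a)\,\mathrm da=\bar\pi_{\beta,\delta}(\tilde a\mid s)$, which gives \eqref{eq:final-policy}. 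By Tonelli, $C(s)=\int\bar\pi_{\beta,\delta}e^{\tau Q_\psi}\,\mathrm d\tilde a$, so the normalizer is consistent.

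The only real obstacle is Step 1's interchange of the supremum over $\pi^{\mathrm{ref}}$ with the expectation over $a$. It must be justified together with finiteness of $Z$, since the paper leaves integrability implicit. The cleanest route is the chain-rule reformulation: it is a single KL-regularized problem on the product space, so the Gibbs variational principle, already invoked for \eqref{eq:gibbs-policy}, applies directly without any interchange. I would therefore present that route as the main proof and use Steps 1–2 as the interpretation.
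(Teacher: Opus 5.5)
Your proof is correct and follows the paper's appendix argument. Steps 1--3 match it step for step: solve the inner problem to get $V_\delta=\tau^{-1}\log Z$, apply the Gibbs identity again to get $\pi^{\mathrm{base},\star}=\pi_\beta Z/Z_J$, then marginalize, so that $Z(s,a)$ cancels and $e^{\tau Q_\psi(s,\tilde a)}$ factors out of $\int\pi_\beta(a\mid s)\rho_\delta(\tilde a-a)\,\mathrm da$. The product-space KL chain-rule route you prefer as the main proof is the heuristic the paper gives in the main text just before the proposition, and your integrability conditions are covered by the appendix's assumption that $Q_\psi(s,\cdot)$ is bounded.
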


The two stages of proposal selection and refinement recover the Gibbs policy improvement rule with $\bar\pi_{\beta,\delta}$ replacing $\pi_\beta$.
Proposals originate from $\pi_\beta$, while $\delta$ controls the spread of their Gaussian neighborhoods.
Nearby proposals can provide reference density to actions with low behavior density.

\par
\begingroup
\setlength{\columnsep}{14pt}
\setlength{\intextsep}{0.9pt}
\begin{wrapfloat}{algorithm}{r}{0.4\textwidth}
\captionsetup{font=small,labelfont=bf,labelsep=space,
        singlelinecheck=false,skip=3pt,position=top}
    \caption{PReFlow: Action generation}
    \label{algo:preflow-action}
    \begin{algorithmic}
    \setlength{\leftskip}{-\leftmargin}

    \Statex \textbf{Input:}
    $s$, $f_\beta$, $v_\phi$, $Q_\psi$, $K$, $\delta$.

    \State $\varepsilon'_{1:K}\overset{\mathrm{i.i.d.}}{\sim}
        \mathcal N(0,I)$
    \State $a_{1:K}\gets
        \mathrm{ODE}_{f_\beta}(\varepsilon'_{1:K};s)$
    \Comment{\eqref{eq:fm-ode}}

    \State $I\gets\arg\max_{i\in\{1,\ldots,K\}}Q_\psi(s,a_i)$
    \State $a\gets a_I$
    \State $y_0\sim\mathcal N(0,I)$
    \State $y_1\gets\mathrm{ODE}_{v_\phi}(y_0;s,a)$
    \Comment{\eqref{eq:refinement-policy}}

    \Statex \textbf{Output:} $(a,y_1)$.
    \end{algorithmic}
\end{wrapfloat}

\paragraph{Practical proposal selection.}
Computing $V_\delta(s,a)$ requires integrating over possible
refinements, whereas $Q_\psi(s,a)$ is available from a critic
evaluation.
For a sufficiently smooth critic, the small-$\delta$
expansion $V_\delta(s,a)=Q_\psi(s,a)+O(\delta^2)$
motivates using the critic value as an approximation.
For $K$ candidates $a_{1:K}\sim\pi_\beta(\cdot\mid s)$,
replacing $V_\delta(s,a_i)$ by $Q_\psi(s,a_i)$
in~\eqref{eq:proposal-target} gives finite-temperature
selection weights
$w_i\propto\exp\!\left(\tau Q_\psi(s,a_i)\right)$.
We instead use greedy selection as shown in Algorithm~\ref{algo:preflow-action}; 
Appendix~\ref{app:abl-selection} provides comparison of two methods.

\par
\endgroup

\subsection{Simulation-free Training of Flow Refinement}
\label{sec:am} 

\paragraph{Conditional flow parameterization.}
To parameterize the refinement policy, we write $\Delta a=\delta y$
and $\tilde a=a+\delta y$.
Using the Gaussian reference $\rho=\mathcal N(0,I)$ for $y$,
we can rewrite~\eqref{eq:refinement-target} as 
\begin{equation}
    q^\star(y\mid s,a)
    \propto
    \rho(y)\exp\!\left(\tau Q_\psi(s,a+\delta y)\right).
    \label{eq:conditional-target}
\end{equation}
We approximate this target with a conditional flow:
\begin{equation}
    \frac{\mathrm dY_t}{\mathrm dt}
    =v_\phi(Y_t,t\mid s,a),
    \qquad Y_0\sim\mathcal N(0,I),
    \qquad Y_1\sim q_\phi(\cdot\mid s,a).
    \label{eq:refinement-policy}
\end{equation}
With the state and proposal fixed throughout the rollout,
the endpoint yields the refinement $\Delta a=\delta Y_1$.

\par
\begingroup
\setlength{\columnsep}{14pt}
\setlength{\intextsep}{0.9pt}
\begin{wrapfloat}{algorithm}{r}{0.4\textwidth}
    \captionsetup{font=small,labelfont=bf,labelsep=space,
        singlelinecheck=false,skip=3pt,position=top}
    \caption{PReFlow: Flow Update}
    \label{algo:PReFlow}
    \begin{algorithmic}
    \setlength{\leftskip}{-\leftmargin}

    \Statex \textbf{Input:}
    $\mathcal D$, $f_\beta$, $v_\phi$, $Q_\psi$, $\delta$, $\tau$.

    \Statex \textbf{Generate endpoints}
    \State Sample a replay state $s\sim\mathcal D$.
    \State $(a,y_1)\gets$ Algorithm~\ref{algo:preflow-action}, $K=1$. 
    \State $(a,y_1)\gets\operatorname{sg}[(a,y_1)]$
    \Comment{{\scriptsize $\operatorname{sg}$: stop-gradient}}

    \Statex \vspace{3pt}\textbf{Construct the velocity target}
    \State $t\sim\mathcal U(0,1)$, $\varepsilon\sim\mathcal N(0,I)$
    \State Compute $y_t$ using~\eqref{eq:reference-bridge}.
    \State $\tilde a_1\gets a+\delta y_1$
    \State Compute $v^{\mathrm{tgt}}$ using~\eqref{eq:vtgt}.

    \Statex \vspace{3pt}\textbf{Update the refinement flow}
    \State Update $\phi$ by minimizing~\eqref{eq:actor-loss}.

    \Statex \vspace{2pt}\textbf{Output:} $v_\phi$.
    \end{algorithmic} 
\end{wrapfloat}

\paragraph{Simulation-free target construction.}
To train this flow, we construct regression targets using the stochastic optimal
control (SOC) formulation of Adjoint Matching~\citep{am}.
QAM~\citep{qam} uses the pretrained behavior flow as its reference,
so constructing targets requires the sequential backward solve
of~\eqref{eq:lean-adjoint}.
Our objective instead regularizes the refinement toward the Gaussian
reference $\rho$, while the behavior flow only generates proposals.
We therefore use the memoryless linear Gaussian reference flow
~\citep{eam}, whose velocity field is
$v_0(y,t)=(2t-1)y/d(t)$ with $d(t)=t^2+(1-t)^2$.
Using $v_0$ in place of $f_\theta$ in the memoryless
SDE~\eqref{eq:memoryless-sde} defines a Gaussian reference process
for the normalized refinement $y$, with terminal distribution $\rho$
and closed form bridges for constructing the targets.

Since the conditional target is independent of the
proposal-selection rule, we draw proposals directly from $\pi_\beta$.
Following Adjoint Sampling~\citep{adjointsampling}, we generate
an endpoint $y_1\sim q_\phi(\cdot\mid s,a)$ with a forward flow
rollout and define the refined action as
$\tilde a_1=a+\delta y_1$.
We then draw $t\sim\mathcal U(0,1)$ and directly sample
the intermediate state from the reference process
conditioned on $y_1$ as
\begin{equation}
    y_t=ty_1+(1-t)\varepsilon,
    \qquad \varepsilon\sim\mathcal N(0,I).
    \label{eq:reference-bridge}
\end{equation}
\par
\endgroup

\begin{proposition}[Simulation-free refinement velocity target]
\label{prop:am}
For an endpoint $y_1$ and a bridge sample $y_t$ from
\eqref{eq:reference-bridge}, the adjoint matching velocity target
for the Gaussian reference above is 

\graybox{
\begin{equation}
    v^{\mathrm{tgt}}(y_t,y_1,t\mid s,a)
    =\frac{2t-1}{d(t)}y_t
    +\frac{\tau\delta(1-t)}{d(t)}
        \nabla_{\tilde a}Q_\psi(s,\tilde a_1).
    \label{eq:vtgt}
\end{equation}
}
\end{proposition}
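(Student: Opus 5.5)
The plan is to instantiate the adjoint matching construction recalled in Section~\ref{sec:prelim} with the memoryless linear Gaussian reference $v_0(y,t)=(2t-1)y/d(t)$ in place of $f_\beta$, and the terminal reward $\tau Q_\psi(s,a+\delta y)$ of the target~\eqref{eq:conditional-target}. Three ingredients are needed: the lean adjoint, the AM velocity target, and the reference bridge. The reason to expect closed forms is that the reference drift is linear in $y$ with a purely time-dependent coefficient. Then the lean adjoint ODE~\eqref{eq:lean-adjoint} decouples from the path and depends only on the endpoint. The target itself follows from the general form $v_0-\sigma_t^2\lambda/2$ stated for~\eqref{eq:adjoint matching}.

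\textbf{Step 1 (terminal condition).} In the normalized variable $y$, the terminal cost is $-\tau Q_\psi(s,a+\delta y)$. By the chain rule,
\[
\lambda(1)=-\tau\nabla_{y}Q_\psi(s,a+\delta y_1)=-\tau\delta\,\nabla_{\tilde a}Q_\psi(s,\tilde a_1),
\]
which is exactly where the factor $\delta$ in~\eqref{eq:vtgt} comes from.

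\textbf{Step 2 (closed-form lean adjoint).} The reference SDE drift is $2v_0(y,t)-y/t$, so its Jacobian is the scalar $\bigl(2(2t-1)/d(t)-1/t\bigr)I$. The lean adjoint therefore solves the scalar linear ODE
\[
\dot\lambda=-\Bigl(\tfrac{2(2t-1)}{d(t)}-\tfrac1t\Bigr)\lambda .
\]
I would verify directly that $\lambda(t)=c\,t/d(t)$ solves it. Differentiating and using $d'(t)=4t-2$ gives numerator $d-t d'=1-2t^2$. The right-hand side gives $c\,(d-4t^2+2t)/d^2=c(1-2t^2)/d^2$, so the two sides agree. Since $d(1)=1$, the terminal condition fixes $c=\lambda(1)$. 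Hence
\[
\lambda(t)=-\tau\delta\,\tfrac{t}{d(t)}\nabla_{\tilde a}Q_\psi(s,\tilde a_1).
\]
This depends on the trajectory only through $y_1$, which is why no backward solve is needed.

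\textbf{Step 3 (velocity target).} Substituting into $v_0-\sigma_t^2\lambda/2$ with $\sigma_t^2/2=(1-t)/t$ gives
\[
v^{\mathrm{tgt}}=\tfrac{2t-1}{d(t)}y_t+\tfrac{1-t}{t}\cdot\tau\delta\tfrac{t}{d(t)}\nabla_{\tilde a}Q_\psi(s,\tilde a_1).
\]
The factors of $t$ cancel, which yields~\eqref{eq:vtgt}.

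\textbf{Step 4 (bridge; main obstacle).} The remaining point is to justify that $y_t$ may be drawn as in~\eqref{eq:reference-bridge} rather than by simulating the SDE. Following Adjoint Sampling~\citep{adjointsampling}, the AM regression may evaluate the target at intermediate states drawn from the reference process conditioned on the endpoint. This is legitimate because, for memoryless references, the optimal path measure is the reference bridge reweighted only through $y_1$.

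I would establish that for the reference SDE the joint law of $(Y_t,Y_1)$ is Gaussian with $Y_t\mid Y_1=y_1\sim\mathcal N\bigl(t y_1,(1-t)^2 I\bigr)$. The cleanest route is to note that, by memorylessness, $Y_0$ is independent of $Y_1$, and that the marginals match the linear interpolation $tY_1+(1-t)Y_0$. To confirm this I would solve the linear SDE explicitly, computing $\mathrm{Cov}(Y_t,Y_1)=t$ and $\mathrm{Var}(Y_t)=d(t)$, and then Gaussian conditioning gives the stated bridge. Alternatively I would cite~\citep{eam} for this bridge. Checking these variance computations is the step I expect to require the most care.
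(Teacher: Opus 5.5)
Your proposal is correct and follows the paper's proof essentially step for step. The linear reference drift has Jacobian $D(t)I$ with $D(t)=(2t^2-1)/(t\,d(t))$. The lean adjoint is $\lambda(t)=\frac{t}{d(t)}\lambda(1)$: the paper obtains this by writing $D(t)=\frac{\mathrm{d}}{\mathrm{d}t}\log(d(t)/t)$, and you verify it by direct differentiation. Substituting into $v_0-\sigma(t)^2\lambda(t)/2$, with the chain-rule factor $\delta$, gives \eqref{eq:vtgt}.

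Your Step~4 is also correct: the transition factor gives $\mathrm{Cov}(Y_t,Y_1)=\frac{t}{d(t)}\cdot d(t)=t$. The paper does not prove this inside the proposition, however. It takes the endpoint-conditioned bridge as a stated property of the EAM reference.
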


With $v_\phi$ and $v_0$ replacing $f_\theta$ and $f_\beta$,
the integrand in~\eqref{eq:adjoint matching} becomes
$\frac{4}{\sigma_t^2}\|v_\phi-v^{\mathrm{tgt}}\|_2^2$.
Following RAM~\citep{ram}, we omit this time weight
and use unweighted regression on the bridge samples,
holding sampled endpoints and targets fixed during the update.
\begin{equation}
    \mathcal L_{\mathrm{actor}}(\phi)
    =
    \mathbb E_{\substack{
        s\sim\mathcal D,\;
        a\sim\pi_\beta(\cdot\mid s),\;
        y_1\sim q_\phi(\cdot\mid s,a)\\
        t\sim\mathcal U(0,1),\;
        \varepsilon\sim\mathcal N(0,I)
    }}\!\Bigl[
        \bigl\|v_\phi(y_t,t\mid s,a)
        -\operatorname{sg}[v^{\mathrm{tgt}}]\bigr\|_2^2
    \Bigr],
    \label{eq:actor-loss}
\end{equation}
Algorithm~\ref{algo:PReFlow} summarizes the refinement flow update.
The full training algorithm, including critic and behavior flow
updates, is provided in Algorithm~\ref{algo:preflow-full} in Appendix~\ref{app:training-procedure}.

%% file: sections/5_related_work.tex
\section{Related Work}

\paragraph{Policy extraction with diffusion and flow policies.}
Diffusion and flow policies incorporate critic information through
end-to-end gradients~\citep{diffusionql}, value-weighted
fitting~\citep{efm}, score or noise-regression
objectives~\citep{qsm,dac}, and energy guidance~\citep{qgpo,flowq}.
FQL~\citep{fql} regularizes a separate noise-conditioned one-step
actor toward a flow behavior model, while Q-Flow and
RQL~\citep{qflow,rql} learn values over intermediate generative states.
QPILOTS and QGF apply critic-gradient guidance during
sampling~\citep{qpilots,qgf}. PReFlow instead trains its conditional
refinement flow with endpoint critic gradients.
At inference, critic values guide selection among behavior proposals,
while refinement requires no critic-gradient evaluations.

\paragraph{Behavior proposals and conditional action refinement.}
Selection among behavior proposals~\citep{emaq,idql,dpe}
can be combined with local improvement~\citep{bcq,parl,expo}.
Residual policies learn additive modifications to base-policy
actions~\citep{rpl,residualrl,policydecorator}.
EXPO and QAM-E use Gaussian editors~\citep{expo,qam}, whereas
DeFlow and Fisher Decorator learn deterministic
refinements~\citep{deflow,fidec}.
SPAR~\citep{spar} learns multimodal residuals through a conditional
VAE and latent self-imitation, and evaluates flow-based variants.
FLAG~\citep{flag} uses local Gaussian policies around flow-generated
anchors to supervise policy improvement. PReFlow trains a conditional refinement flow with adjoint matching under proposal-centered KL regularization.



%% file: sections/6_experiments.tex
\section{Experiments}
\label{sec:exp}

We evaluate PReFlow in offline and offline-to-online reinforcement
learning and examine the effects of refinement policy design,
along with computational cost and hyperparameter sensitivity. Detailed experimental settings and hyperparameter settings are provided
in Appendix~\ref{app:experiments}.

\paragraph{Domains and datasets.}
We evaluate PReFlow on a total of 50 tasks from the OGBench single-task
suite~\citep{ogbench}, spanning antmaze, humanoidmaze, scene,
puzzle, and cube domains.
We follow QAM's advanced ten domain variants and reward configurations~\citep{qam}.

\paragraph{Baselines.}
We compare with the policy extraction baselines originally evaluated by
\citet{qam}: Gaussian policy (ReBRAC~\citep{rebrac}),
critic-guided method (QSM~\citep{qsm}),
post-processing methods (DSRL, FEdit, and IFQL~\citep{dsrl,qam}),
and adjoint matching methods (QAM, QAM-F, and QAM-E~\citep{qam}).
We also compare with more recent policy extraction methods for flow policies,
QPILOTS-M/U~\citep{qpilots}, Q-Flow~\citep{qflow}, RQL~\citep{rql}, and
TRQAM~\citep{dong2026trustregionqadjoint}.

\subsection{Offline and Offline-to-Online Performance}
\label{sec:exp-performance}

Following the QAM protocol~\citep{qam}, we train for 1M offline
gradient steps and fine-tune for 500K online environment steps.
We report success rates at the end of each phase. Domain scores
average five tasks, and the overall score averages all 50 tasks.
We use eight seeds per task for PReFlow and the reproduced baselines.
Task-level results and learning curves are in
Appendix~\ref{app:full-results}.

\begin{table}[!t]
\centering
\begingroup
\def\PReFlowbest#1{{%
  \setlength{\fboxsep}{0.6pt}%
  \colorbox{background}{\strut #1}}}
\def\PReFlowpair#1#2{%
  \vphantom{\PReFlowbest{0}}%
  \makebox[1.50em][r]{#1}%
  \makebox[0.82em][c]{\textcolor{gray}{$\scriptstyle\to$}}%
  \makebox[1.50em][l]{#2}%
}
\def\PReFlowpairCI#1#2#3#4{%
  \shortstack[c]{%
    \makebox[\PReFlowResultWidth]{%
      \makebox[0.48\PReFlowResultWidth][c]{%
        {\fontsize{3.5}{4.0}\selectfont\textcolor{gray}{[#2]}}}%
      \hfill
      \makebox[0.48\PReFlowResultWidth][c]{%
        {\fontsize{3.5}{4.0}\selectfont\textcolor{gray}{[#4]}}}}%
    \\[-1pt]\PReFlowpair{#1}{#3}}%
}
\def\PReFlowgroup#1{%
  \addlinespace[3pt]
  \multicolumn{12}{@{}l@{}}{%
    {\fontsize{7.1}{8.5}\selectfont\color{gray}\ttfamily\bfseries #1}}\\[-1pt]
}

\fontsize{7.5}{9.0}\selectfont
\setlength{\tabcolsep}{3pt}
\renewcommand{\arraystretch}{1.2}

\newlength{\PReFlowMethodWidth}
\newlength{\PReFlowResultWidth}
\setlength{\PReFlowMethodWidth}{44pt}
\setlength{\PReFlowResultWidth}{%
  \dimexpr(\linewidth-\PReFlowMethodWidth-22\tabcolsep)/11\relax}

\begin{tabular}{%
  @{}>{\raggedright\arraybackslash}m{\PReFlowMethodWidth}%
  *{11}{>{\centering\arraybackslash}m{\PReFlowResultWidth}}@{}}
\toprule

 & \texttt{\textbf{al}} & \texttt{\textbf{ag}} & \texttt{\textbf{hm}} & \texttt{\textbf{hl}}
& \texttt{\textbf{scene}} & \texttt{\textbf{p33}} & \texttt{\textbf{p44}} & \texttt{\textbf{c2}}
& \texttt{\textbf{c3}} & \texttt{\textbf{c4}} & \texttt{\textbf{all}} \\

Method
& {\color{gray}\fontsize{5.5}{6.2}\selectfont 5 tasks}
& {\color{gray}\fontsize{5.5}{6.2}\selectfont 5 tasks}
& {\color{gray}\fontsize{5.5}{6.2}\selectfont 5 tasks}
& {\color{gray}\fontsize{5.5}{6.2}\selectfont 5 tasks}
& {\color{gray}\fontsize{5.5}{6.2}\selectfont 5 tasks}
& {\color{gray}\fontsize{5.5}{6.2}\selectfont 5 tasks}
& {\color{gray}\fontsize{5.5}{6.2}\selectfont 5 tasks}
& {\color{gray}\fontsize{5.5}{6.2}\selectfont 5 tasks}
& {\color{gray}\fontsize{5.5}{6.2}\selectfont 5 tasks}
& {\color{gray}\fontsize{5.5}{6.2}\selectfont 5 tasks}
& {\color{gray}\fontsize{5.5}{6.2}\selectfont 50 tasks} \\
\midrule

\PReFlowgroup{Gaussian}
\texttt{\textbf{ReBRAC}}
 & \PReFlowpairCI{94}{94,95}{\PReFlowbest{98}}{97,98}
 & \PReFlowpairCI{\PReFlowbest{57}}{53,60}{75}{69,79}
 & \PReFlowpairCI{69}{65,74}{58}{52,63}
 & \PReFlowpairCI{17}{15,19}{23}{18,28}
 & \PReFlowpairCI{65}{61,69}{99}{98,99}
 & \PReFlowpairCI{79}{73,84}{\PReFlowbest{100}}{100,100}
 & \PReFlowpairCI{0}{0,0}{0}{0,0}
 & \PReFlowpairCI{9}{8,10}{97}{96,98}
 & \PReFlowpairCI{1}{0,1}{14}{9,20}
 & \PReFlowpairCI{9}{6,11}{\PReFlowbest{80}}{79,80}
 & \PReFlowpairCI{40}{39,41}{64}{63,65} \\

\PReFlowgroup{Guidance}
\texttt{\textbf{QSM}}
 & \PReFlowpairCI{90}{88,92}{94}{91,96}
 & \PReFlowpairCI{24}{19,29}{87}{83,91}
 & \PReFlowpairCI{82}{79,84}{80}{78,81}
 & \PReFlowpairCI{6}{5,7}{2}{1,3}
 & \PReFlowpairCI{78}{77,80}{82}{81,83}
 & \PReFlowpairCI{57}{52,63}{86}{82,90}
 & \PReFlowpairCI{0}{0,0}{98}{94,100}
 & \PReFlowpairCI{33}{31,34}{98}{98,99}
 & \PReFlowpairCI{6}{5,6}{27}{25,28}
 & \PReFlowpairCI{19}{18,19}{67}{62,71}
 & \PReFlowpairCI{39}{38,40}{72}{71,73} \\
\texttt{\textbf{QPILOTS-U}}
 & \PReFlowpairCI{82}{76,88}{96}{94,98}
 & \PReFlowpairCI{1}{0,4}{89}{85,94}
 & \PReFlowpairCI{63}{54,75}{\PReFlowbest{99}}{98,100}
 & \PReFlowpairCI{8}{2,14}{59}{40,72}
 & \PReFlowpairCI{89}{80,95}{98}{96,100}
 & \PReFlowpairCI{\PReFlowbest{100}}{100,100}{\PReFlowbest{100}}{100,100}
 & \PReFlowpairCI{16}{6,25}{\PReFlowbest{100}}{100,100}
 & \PReFlowpairCI{\PReFlowbest{75}}{71,80}{\PReFlowbest{100}}{99,100}
 & \PReFlowpairCI{\PReFlowbest{7}}{4,10}{69}{57,74}
 & \PReFlowpairCI{3}{0,6}{78}{76,80}
 & \PReFlowpairCI{44}{42,47}{89}{87,90} \\
\texttt{\textbf{QPILOTS-M}}
 & \PReFlowpairCI{82}{74,88}{97}{93,98}
 & \PReFlowpairCI{10}{0,14}{90}{85,95}
 & \PReFlowpairCI{71}{64,75}{\PReFlowbest{99}}{98,100}
 & \PReFlowpairCI{12}{6,16}{60}{38,70}
 & \PReFlowpairCI{95}{91,98}{99}{97,100}
 & \PReFlowpairCI{\PReFlowbest{100}}{99,100}{\PReFlowbest{100}}{100,100}
 & \PReFlowpairCI{17}{2,27}{\PReFlowbest{100}}{100,100}
 & \PReFlowpairCI{\PReFlowbest{75}}{70,81}{\PReFlowbest{100}}{99,100}
 & \PReFlowpairCI{\PReFlowbest{7}}{5,9}{72}{67,75}
 & \PReFlowpairCI{10}{3,20}{79}{78,80}
 & \PReFlowpairCI{48}{46,50}{90}{87,91} \\

\PReFlowgroup{Post-processing}
\texttt{\textbf{DSRL}}
 & \PReFlowpairCI{61}{56,66}{90}{88,91}
 & \PReFlowpairCI{3}{1,4}{40}{36,45}
 & \PReFlowpairCI{53}{48,58}{77}{71,82}
 & \PReFlowpairCI{3}{2,5}{29}{22,36}
 & \PReFlowpairCI{\PReFlowbest{99}}{99,100}{\PReFlowbest{100}}{100,100}
 & \PReFlowpairCI{87}{82,92}{87}{78,95}
 & \PReFlowpairCI{0}{0,0}{0}{0,0}
 & \PReFlowpairCI{74}{72,76}{99}{99,99}
 & \PReFlowpairCI{1}{1,2}{0}{0,0}
 & \PReFlowpairCI{2}{2,3}{78}{77,78}
 & \PReFlowpairCI{38}{38,39}{60}{59,61} \\
\texttt{\textbf{FEdit}}
 & \PReFlowpairCI{58}{54,62}{96}{95,96}
 & \PReFlowpairCI{2}{1,3}{86}{83,90}
 & \PReFlowpairCI{22}{20,23}{80}{72,88}
 & \PReFlowpairCI{3}{2,3}{8}{5,11}
 & \PReFlowpairCI{62}{60,65}{99}{98,99}
 & \PReFlowpairCI{99}{98,100}{\PReFlowbest{100}}{100,100}
 & \PReFlowpairCI{34}{30,37}{\PReFlowbest{100}}{100,100}
 & \PReFlowpairCI{40}{37,43}{\PReFlowbest{100}}{100,100}
 & \PReFlowpairCI{2}{2,3}{40}{34,45}
 & \PReFlowpairCI{5}{3,7}{\PReFlowbest{80}}{79,80}
 & \PReFlowpairCI{33}{32,33}{79}{78,80} \\
\texttt{\textbf{IFQL}}
 & \PReFlowpairCI{36}{32,39}{76}{71,81}
 & \PReFlowpairCI{1}{0,2}{17}{16,18}
 & \PReFlowpairCI{86}{85,87}{51}{47,56}
 & \PReFlowpairCI{24}{21,27}{7}{5,9}
 & \PReFlowpairCI{84}{83,85}{96}{96,97}
 & \PReFlowpairCI{\PReFlowbest{100}}{100,100}{\PReFlowbest{100}}{99,100}
 & \PReFlowpairCI{0}{0,0}{0}{0,0}
 & \PReFlowpairCI{11}{10,12}{79}{79,80}
 & \PReFlowpairCI{0}{0,0}{0}{0,0}
 & \PReFlowpairCI{2}{1,3}{61}{58,63}
 & \PReFlowpairCI{34}{34,35}{49}{48,50} \\

\PReFlowgroup{Intermediate-value learning}
\texttt{\textbf{Q-Flow}}
 & \PReFlowpairCI{\PReFlowbest{95}}{94,96}{\PReFlowbest{98}}{97,99}
 & \PReFlowpairCI{36}{32,39}{54}{51,58}
 & \PReFlowpairCI{87}{86,89}{96}{95,96}
 & \PReFlowpairCI{8}{7,10}{10}{9,12}
 & \PReFlowpairCI{97}{96,98}{99}{99,100}
 & \PReFlowpairCI{\PReFlowbest{100}}{100,100}{\PReFlowbest{100}}{100,100}
 & \PReFlowpairCI{0}{0,1}{0}{0,1}
 & \PReFlowpairCI{37}{34,39}{99}{99,100}
 & \PReFlowpairCI{3}{2,4}{25}{17,34}
 & \PReFlowpairCI{15}{11,19}{\PReFlowbest{80}}{80,80}
 & \PReFlowpairCI{48}{47,48}{66}{65,67} \\
\texttt{\textbf{RQL}}
 & \PReFlowpairCI{82}{80,84}{90}{88,91}
 & \PReFlowpairCI{41}{36,45}{58}{54,63}
 & \PReFlowpairCI{\PReFlowbest{96}}{92,99}{\PReFlowbest{99}}{98,99}
 & \PReFlowpairCI{\PReFlowbest{40}}{38,43}{51}{48,53}
 & \PReFlowpairCI{87}{85,89}{98}{97,98}
 & \PReFlowpairCI{\PReFlowbest{100}}{100,100}{\PReFlowbest{100}}{100,100}
 & \PReFlowpairCI{32}{21,43}{\PReFlowbest{100}}{100,100}
 & \PReFlowpairCI{18}{17,20}{91}{89,93}
 & \PReFlowpairCI{3}{2,4}{38}{33,43}
 & \PReFlowpairCI{\PReFlowbest{49}}{44,53}{\PReFlowbest{80}}{79,80}
 & \PReFlowpairCI{\PReFlowbest{55}}{53,56}{80}{80,81} \\

\PReFlowgroup{Adjoint matching}
\texttt{\textbf{QAM}}
 & \PReFlowpairCI{81}{78,84}{\PReFlowbest{98}}{97,98}
 & \PReFlowpairCI{18}{14,22}{53}{52,54}
 & \PReFlowpairCI{67}{64,69}{83}{78,88}
 & \PReFlowpairCI{11}{9,14}{21}{17,24}
 & \PReFlowpairCI{97}{96,98}{\PReFlowbest{100}}{100,100}
 & \PReFlowpairCI{\PReFlowbest{100}}{99,100}{\PReFlowbest{100}}{100,100}
 & \PReFlowpairCI{0}{0,0}{3}{0,8}
 & \PReFlowpairCI{64}{62,66}{\PReFlowbest{100}}{100,100}
 & \PReFlowpairCI{3}{3,4}{70}{64,76}
 & \PReFlowpairCI{3}{2,4}{71}{67,74}
 & \PReFlowpairCI{44}{44,45}{70}{69,71} \\
\texttt{\textbf{QAM-F}}
 & \PReFlowpairCI{83}{81,84}{93}{92,94}
 & \PReFlowpairCI{12}{8,16}{72}{70,74}
 & \PReFlowpairCI{65}{62,68}{85}{80,90}
 & \PReFlowpairCI{12}{9,14}{20}{16,23}
 & \PReFlowpairCI{95}{92,97}{\PReFlowbest{100}}{99,100}
 & \PReFlowpairCI{99}{99,100}{\PReFlowbest{100}}{100,100}
 & \PReFlowpairCI{6}{5,8}{\PReFlowbest{100}}{100,100}
 & \PReFlowpairCI{65}{63,67}{\PReFlowbest{100}}{100,100}
 & \PReFlowpairCI{3}{2,3}{68}{64,73}
 & \PReFlowpairCI{14}{11,17}{79}{79,80}
 & \PReFlowpairCI{45}{45,46}{82}{81,82} \\
\texttt{\textbf{QAM-E}}
 & \PReFlowpairCI{83}{80,86}{96}{95,97}
 & \PReFlowpairCI{1}{0,3}{\PReFlowbest{95}}{91,97}
 & \PReFlowpairCI{59}{54,63}{83}{75,90}
 & \PReFlowpairCI{2}{1,3}{16}{11,22}
 & \PReFlowpairCI{97}{96,98}{\PReFlowbest{100}}{100,100}
 & \PReFlowpairCI{\PReFlowbest{100}}{100,100}{\PReFlowbest{100}}{100,100}
 & \PReFlowpairCI{39}{35,43}{99}{97,100}
 & \PReFlowpairCI{65}{63,68}{\PReFlowbest{100}}{100,100}
 & \PReFlowpairCI{5}{4,6}{\PReFlowbest{79}}{79,79}
 & \PReFlowpairCI{6}{4,9}{79}{79,80}
 & \PReFlowpairCI{46}{45,47}{85}{84,86} \\
\texttt{\textbf{TRQAM}}
 & \PReFlowpairCI{82}{80,85}{97}{96,97}
 & \PReFlowpairCI{14}{12,15}{92}{90,94}
 & \PReFlowpairCI{28}{26,30}{76}{73,79}
 & \PReFlowpairCI{2}{1,3}{7}{5,9}
 & \PReFlowpairCI{72}{66,80}{99}{99,100}
 & \PReFlowpairCI{\PReFlowbest{100}}{100,100}{\PReFlowbest{100}}{100,100}
 & \PReFlowpairCI{0}{0,0}{0}{0,0}
 & \PReFlowpairCI{\PReFlowbest{75}}{73,76}{\PReFlowbest{100}}{100,100}
 & \PReFlowpairCI{5}{4,6}{76}{73,78}
 & \PReFlowpairCI{3}{2,5}{76}{76,77}
 & \PReFlowpairCI{38}{37,39}{72}{72,73} \\

\PReFlowgroup{Ours}
\texttt{\textbf{PReFlow}}
 & \PReFlowpairCI{82}{79,85}{\PReFlowbest{98}}{98,99}
 & \PReFlowpairCI{12}{8,15}{94}{91,97}
 & \PReFlowpairCI{65}{60,69}{95}{91,99}
 & \PReFlowpairCI{24}{20,28}{\PReFlowbest{65}}{58,72}
 & \PReFlowpairCI{97}{96,97}{99}{98,99}
 & \PReFlowpairCI{\PReFlowbest{100}}{100,100}{\PReFlowbest{100}}{100,100}
 & \PReFlowpairCI{\PReFlowbest{66}}{58,74}{\PReFlowbest{100}}{100,100}
 & \PReFlowpairCI{59}{56,62}{\PReFlowbest{100}}{100,100}
 & \PReFlowpairCI{\PReFlowbest{7}}{5,8}{75}{74,77}
 & \PReFlowpairCI{15}{12,17}{\PReFlowbest{80}}{80,80}
 & \PReFlowpairCI{53}{51,54}{\PReFlowbest{91}}{90,92} \\
\bottomrule
\end{tabular}
\endgroup
\caption{
\textbf{OGBench: offline $\rightarrow$ online success rates (\%).}
Each cell shows end-of-offline (left) and end-of-online (right)
success rates with 95\% CIs over seeds in small gray brackets. \best{Best results per domain are highlighted}. Per-task results are in Appendix~\ref{app:full-results}.
}
\label{tab:main_combined}

\end{table}

\begin{figure}[!t]
    \centering
    \includegraphics[width=\linewidth]{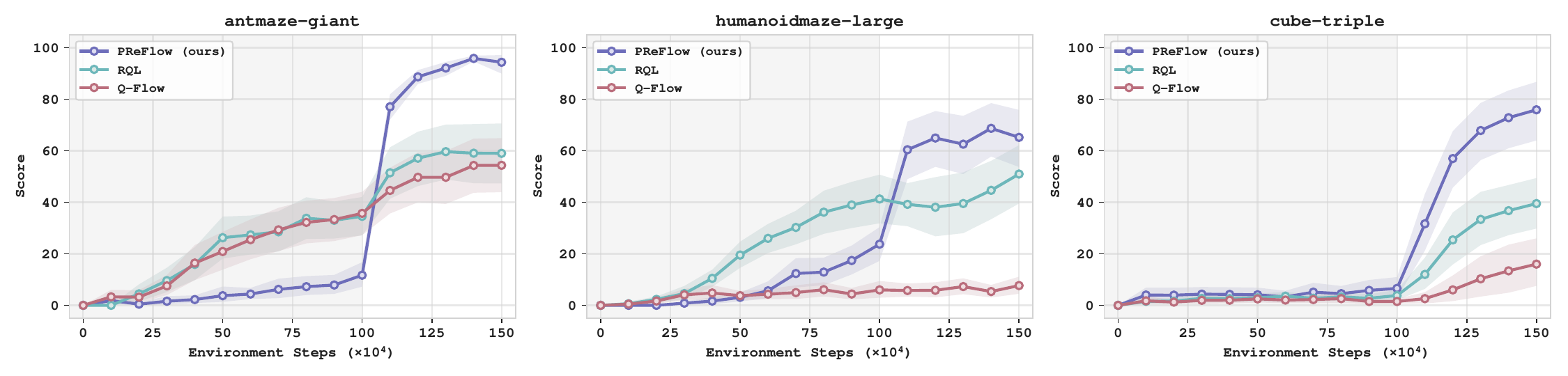}
    \captionsetup{skip=4pt}
    \caption{
\textbf{Learning curves during online fine-tuning.}
PReFlow, RQL, and Q-Flow are fine-tuned for 0.5M online environment
steps after 1M offline gradient steps.
Curves show means over three seeds; shaded regions show pointwise
95\% confidence intervals.
}
    \label{fig:online_adaption}

\end{figure}

Table~\ref{tab:main_combined} reports offline and offline-to-online
performance.
PReFlow achieves an aggregate offline score of 53, competitive
with RQL and above all evaluated adjoint matching and
test-time steering baselines.
On \texttt{p44}, it exceeds the strongest baseline by about
27 percentage points.
PReFlow also outperforms FEdit and IFQL, which use Gaussian
editing and selection among behavior samples.

After 500K online environment steps, PReFlow reaches an aggregate
score of 91, the highest mean among the compared methods.
Figure~\ref{fig:online_adaption} shows that PReFlow outperforms
RQL and Q-Flow after fine-tuning on \texttt{ag},\texttt{hl}, and \texttt{c3} despite relatively low offline scores.
Unlike intermediate value learning methods like RQL and Q-Flow, PReFlow trains its refinement flow directly
from critic gradients at the final action. This may help the refiner respond more directly to critic updates
from online data, while critic-based proposal selection allows PReFlow to further benefit from these updates.

\subsection{Expressiveness and Efficiency of Conditional Refinement Flow}
\label{sec:exp-refinement}

\par
\begingroup
\setlength{\columnsep}{14pt}
\setlength{\intextsep}{-0.45pt}

\begin{wraptable}{r}{0.3\textwidth}
    \centering
    \scriptsize
    \resizebox{\linewidth}{!}{%
    \renewcommand{\arraystretch}{1.1}
    \setlength{\tabcolsep}{1.5pt}
    \begin{tabular}{@{}lccc@{}}
    \toprule
    &
    & \multicolumn{2}{c}{\color{gray} \texttt{\textbf{Refinement Type}}} \\
    \cmidrule(lr){3-4}
    Domain
    & \texttt{\textbf{Base}}
    & \texttt{\textbf{Gaussian}}
    & \texttt{\textbf{Flow}} \\
    \midrule
    \textbf{\texttt{al}}
    & 2.3{\color{gray}\tiny$\pm$0.8}
    & 76.3{\color{gray}\tiny$\pm$7.2}
    & \cellbg 79.0{\color{gray}\tiny$\pm$7.5} \\
    \textbf{\texttt{hm}}
    & 2.7{\color{gray}\tiny$\pm$1.0}
    & 57.1{\color{gray}\tiny$\pm$8.5}
    & \cellbg 69.2{\color{gray}\tiny$\pm$1.2} \\
    \textbf{\texttt{scene}}
    & 3.9{\color{gray}\tiny$\pm$1.5}
    & 68.1{\color{gray}\tiny$\pm$3.2}
    & \cellbg 73.1{\color{gray}\tiny$\pm$0.8} \\
    \textbf{\texttt{c2}}
    & 2.0{\color{gray}\tiny$\pm$0.0}
    & 40.8{\color{gray}\tiny$\pm$5.2}
    & \cellbg 53.9{\color{gray}\tiny$\pm$3.3} \\
    \bottomrule
\end{tabular}
}
    \captionsetup{font=footnotesize,skip=4pt,singlelinecheck=false}
    \caption{
    Offline success rates (\%) after 1M steps with $K=1$.
    }
    \label{tab:abl-edit-family}
\end{wraptable}

\paragraph{Refinement policy class on OGBench.}
We compare no refinement, a learned Gaussian, and PReFlow on four OGBench domains:
\texttt{al}, \texttt{hm},
\texttt{scene}, and \texttt{c2}.
We use a single proposal, and set
$\delta=0.1$ for refinement scale.
Both Gaussian and flow refinement policies are trained under
the conditional objective in~\eqref{eq:conditional-objective}.
Table~\ref{tab:abl-edit-family} shows that both refiners
substantially improve over the base policy.
The flow achieves higher mean scores than the Gaussian across
all four domains, supporting the benefits of expressive
conditional refinement.
\par
\endgroup

\par
\begingroup
\setlength{\columnsep}{14pt}
\setlength{\intextsep}{0pt}

\begin{wrapfigure}{r}{0.3\textwidth}

    \centering
    \includegraphics[width=\linewidth]{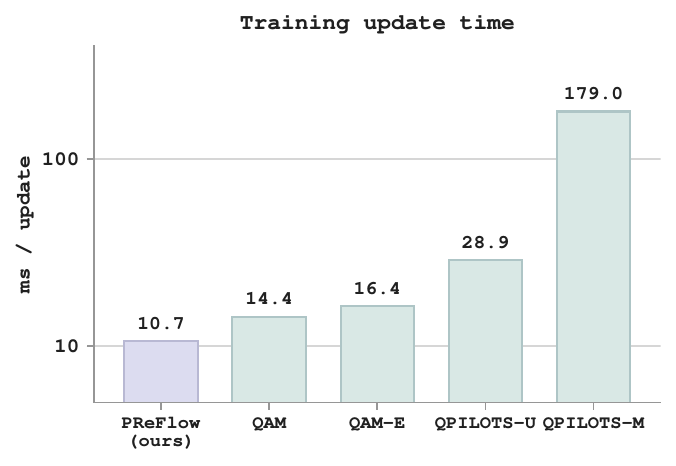}
    \captionsetup{skip=4pt,singlelinecheck=false}
    \caption{
    Mean wall-clock time per full training update.
    }
    \label{fig:flowstep-cost}
\end{wrapfigure}

\paragraph{Computational cost.}
\label{sec:exp-cost}
We next examine the computational cost of learning the flow refinement policy. Figure~\ref{fig:flowstep-cost} compares the
wall-clock time per full training update with QAM, QAM-E,
QPILOTS-U, and QPILOTS-M on GPU,
averaged over five flow-step counts: 5, 10, 20, 50, and 100.
PReFlow has the lowest mean update time among the compared methods. 
Its regression target is constructed in closed form without any backward simulation, whereas QAM requires sequential backward adjoint
propagation. QPILOTS requires per-step steering with additional estimators to sample actions.

\par
\endgroup

\subsection{Effect of Proposal Selection}
\label{sec:exp-selection}

The policy-class comparison above fixes $K=1$.
We now vary $K\in\{1,2,4,8\}$ while retaining learned refinement
in every setting. For $K>1$, PReFlow selects the behavior proposal
with the highest critic value before refinement, while $K=1$ directly
refines a single sampled proposal.

Figure~\ref{fig:selection_ablation} presents results on
\texttt{c2}, \texttt{scene}, and \texttt{hm}.
Overall, proposal selection complements learned refinement,
with gains in offline performance and final online success rates
already evident with $K=2$.
Full results for all ten OGBench domains are provided in
Appendix~\ref{app:abl-selection}. We further compare proposal selection with and without refinement
in Appendix~\ref{app:abl-norefine}, showing selection alone improves over the behavior policy but remains far below refinement even with a single proposal.

\subsection{Sensitivity to Refinement Scale and Inverse Temperature}
\label{sec:exp-sensitivity}

We study how the refinement scale $\delta$ and
inverse temperature $\tau$ affect PReFlow's performance.
As shown in Figure~\ref{fig:sensitivity}, we find that performance
varies with both $\delta$ and $\tau$.
Under a local linear approximation, the conditional
target induces an action refinement with mean
$\tau\delta^2\nabla_a Q(s,a)$ and covariance $\delta^2 I$.
Both $\delta$ and $\tau$ therefore control the effective refinement strength.

\begin{figure}[t]
    \centering
    \includegraphics[width=\linewidth]{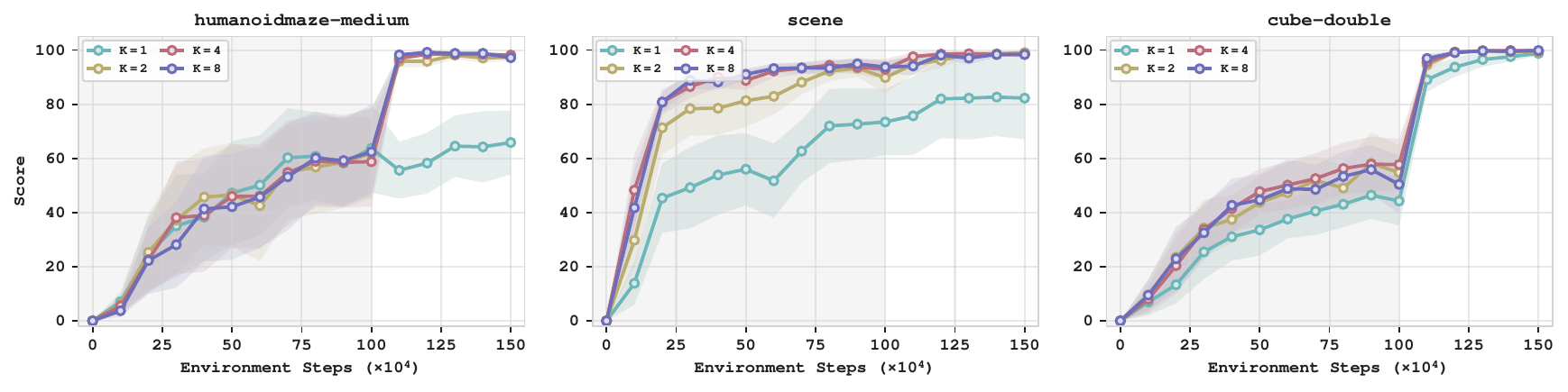}
    \captionsetup{skip=4pt}
    \caption{
    \textbf{Effect of proposal selection.}
    PReFlow with $K\in\{1,2,4,8\}$ behavior proposals on \texttt{hm}, 
    \texttt{scene}, and \texttt{c2}.
    All settings use the learned flow to refine the proposal with the highest critic value. Curves show means over three seeds; shaded regions show pointwise
95\% confidence intervals.
    }
    \label{fig:selection_ablation}

\end{figure}

\begin{figure}
    \centering
    \includegraphics[width=0.9\linewidth]{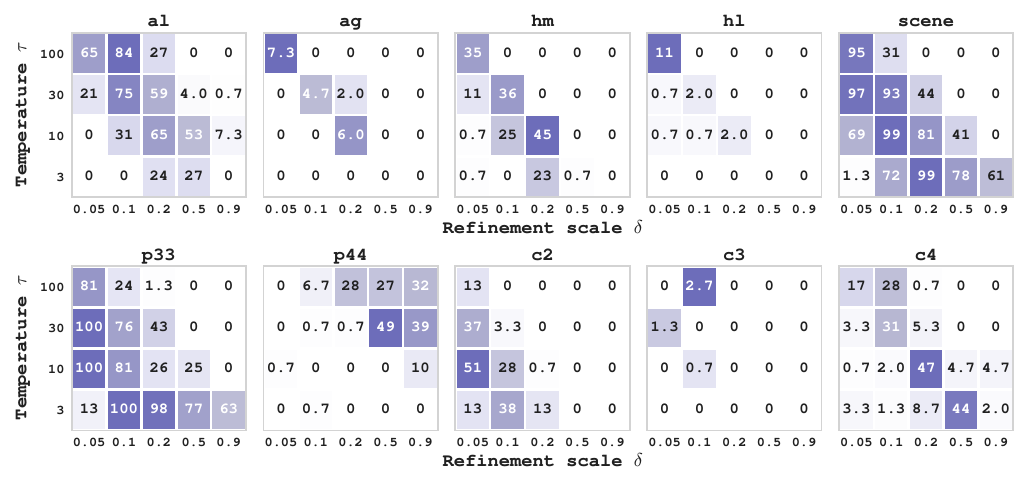}
    \captionsetup{skip=4pt}
    \caption{
\textbf{Sensitivity to the refinement scale and inverse temperature.}
Offline performance on OGBench domains across different combinations of
the refinement scale $\delta$ and inverse temperature $\tau$.
Each cell reports the mean performance over three seeds.
}
\label{fig:sensitivity}

\end{figure}

%% file: sections/7_conclusion.tex
\section{Conclusion}
\label{sec:conclusion} 

We present PReFlow, a policy extraction method that selects among behavior proposals with the critic and refines the selected proposal with a conditional flow.
A proposal-centered Gaussian reference encourages small action changes, while the flow can represent several high-value refinements of the same proposal. The same reference provides closed-form adjoint matching targets
from sampled refinement endpoints and critic gradients, allowing simulation-free training of the flow without a backward adjoint solve. On 50 OGBench tasks, PReFlow is competitive in offline RL and attains the highest aggregate success rate after online fine-tuning among the baselines.
Our results suggest that expressive policies help not only in modeling multimodal behavior but also in refining each proposal.

\paragraph{Limitation.} Proposal selection uses $Q_\psi$ as an approximation to the refinement value $V_\delta$, but proposals with similar critic values can differ in $V_\delta$ through the local gradient and curvature of the critic.
Efficient estimation of $V_\delta$ that captures this dependence is a natural direction for future work.

%% file: sections/8_statements.tex





%% file: sections/9_appendix.tex
\section{Proofs and Technical Details}
\label{app:proofs}

We derive the optimal refinement and selection policies, the adjoint
target, and the proposal-selection approximation. We use the notation
of the main text, with $\tau=\tau(s)>0$ and $\delta>0$.

\subsection{Optimal proposal-conditioned policy improvement}
\label{app:lift-proof}

Suppose $Q_\psi(s,\cdot)$ is bounded. For the proofs, define
\begin{equation}
Z(s,a)=\E_{\Delta a\sim\rho_\delta}
\left[e^{\tau Q_\psi(s,a+\Delta a)}\right],
\quad Z_J(s)=\E_{a\sim\pi_\beta}[Z(s,a)].
\label{eq:app-partition-functions}
\end{equation}
Both constants are finite and strictly positive. Since the critic is considered bounded, it is sufficient to consider policies with finite KL penalties.
Substituting the normalized form of \eqref{eq:refinement-target} into
\eqref{eq:conditional-objective} gives
\begin{equation}
\mathcal J_{s,a}(\pi^{\mathrm{ref}})
=\frac1\tau\log Z(s,a)-\frac1\tau
\operatorname{KL}\!\left(\pi^{\mathrm{ref}}\,\middle\|\,
\pi^{\mathrm{ref},\star}\right).
\label{eq:app-cond-gibbs}
\end{equation}
Thus $\pi^{\mathrm{ref},\star}$ is optimal, with
$V_\delta(s,a)=\tau^{-1}\log Z(s,a)$.

\begin{proof}[Proof of Proposition~\ref{prop:lift}]
After optimizing conditional refinement, \eqref{eq:joint-objective}
reduces to
\begin{equation}
    \E_{a\sim\pi^{\mathrm{base}}}[V_\delta(s,a)]
-\tau^{-1}\operatorname{KL}(\pi^{\mathrm{base}}\|\pi_\beta).
\end{equation}
Applying the same Gibbs variational identity gives the optimal
selection policy
\begin{equation}
\pi^{\mathrm{base},\star}(a\mid s)
=\frac{\pi_\beta(a\mid s)e^{\tau V_\delta(s,a)}}{Z_J(s)}
=\frac{\pi_\beta(a\mid s)Z(s,a)}{Z_J(s)},
\label{eq:app-proposal-marginal}
\end{equation}
Substituting this policy and the
normalized refinement target into the final-action mixture yields
\begin{equation}
\pi_\delta^\star(\tilde a\mid s)
=\int \pi^{\mathrm{base},\star}(a\mid s)
\pi^{\mathrm{ref},\star}(\tilde a-a\mid s,a)\,\dd a
\quad
=\frac{e^{\tau Q_\psi(s,\tilde a)}}{Z_J(s)}
\E_{a\sim\pi_\beta}\left[\rho_\delta(\tilde a-a)\right].
\label{eq:app-final-action-marginal}
\end{equation}
Expanding the Gaussian density $\rho_\delta$ gives
\eqref{eq:final-policy}, which characterizes the unclipped action
distribution.
\end{proof}

\subsection{Gaussian reference and adjoint target}
\label{app:control-details}

As in Section~\ref{sec:am}, write $\Delta a=\delta y$,  $\rho=\N(0,I)$.
We use the $C=1$ member of the linear reference family in
\citet[Proposition~3.1]{eam}. With $d(t)=t^2+(1-t)^2$, its drift and
diffusion coefficient are
\begin{equation}
b(y,t)=D(t)y,
\qquad D(t)=\frac{2t^2-1}{t\,d(t)},
\qquad \sigma(t)^2=\frac{2(1-t)}t,
\quad 0<t<1.
\label{eq:app-gaussian-reference}
\end{equation}
Its marginals are $\N(0,d(t)I)$, its terminal law is $\rho$, and its
endpoint-conditioned marginal is the bridge in
\eqref{eq:reference-bridge}. The singular coefficients at $t=0$ are
interpreted through the Gaussian entrance law, see
\citet[Appendix~B.3]{eam} for the reference construction.

Since the terminal law matches the Gaussian prior in
\eqref{eq:conditional-target}, the density correction in
\citet[Proposition~3.2]{eam} vanishes. The terminal cost is therefore
$g_{s,a}(y)=-\tau Q_\psi(s,a+\delta y)$.
For the controlled SDE
$\dd Y_t=(b(Y_t,t)+\sigma(t)u_t)\dd t+\sigma(t)\dd W_t$,
the memoryless SOC formulation yields the optimal endpoint law
$q^\star$ in \eqref{eq:conditional-target}, under the usual
well-posedness and change-of-measure conditions and a realizable
terminal tilt~\citep[Section~4 and Appendices~C.4 and~D]{am}.

The reference velocity is $v_0(y,t)=(2t-1)y/d(t)$, and we parameterize
the control as
\begin{equation}
u_\phi(y,t\mid s,a)
=\frac2{\sigma(t)}\bigl(v_\phi(y,t\mid s,a)-v_0(y,t)\bigr).
\label{eq:residual-control-velocity}
\end{equation}
This gives controlled drift $2v_\phi-y/t$. Following Adjoint
Sampling~\citep{adjointsampling}, we hold the sampled endpoint fixed
during the update and draw intermediate states from
\eqref{eq:reference-bridge}.

\begin{proof}[Proof of Proposition~\ref{prop:am}]
The reference Jacobian is $\nabla_y b(y,t)=D(t)I$.
Thus \eqref{eq:lean-adjoint} gives
$\dot\lambda(t)=-D(t)\lambda(t)$, with
$\lambda(1)=-\tau\nabla_yQ_\psi(s,a+\delta y)|_{y=y_1}$.
Using $D(t)=\frac{\dd}{\dd t}\log(d(t)/t)$ and $d(1)=1$, we obtain
\begin{equation}
\lambda(t)
=\exp\!\left(\int_t^1D(r)\,\dd r\right)\lambda(1)
=\frac{t}{d(t)}\lambda(1).
\label{eq:app-adjoint}
\end{equation}
The AM control target is $-\sigma(t)\lambda(t)$. Converting it to a
velocity through \eqref{eq:residual-control-velocity} yields
\begin{align}
v^{\mathrm{tgt}}(y_t,y_1,t\mid s,a)
&=v_0(y_t,t)-\frac{\sigma(t)^2}{2}\lambda(t)\nonumber\\
&=\frac{2t-1}{d(t)}y_t
+\frac{1-t}{d(t)}\tau
\left.\nabla_y Q_\psi(s,a+\delta y)\right|_{y=y_1}.
\label{eq:app-velocity-target}
\end{align}
Finally, the chain rule gives
$\nabla_yQ_\psi(s,a+\delta y)|_{y=y_1}
=\delta\nabla_{\tilde a}Q_\psi(s,\tilde a_1)$,
which establishes \eqref{eq:vtgt}.
\end{proof}

The control-to-velocity conversion also determines the AM loss weight:
\begin{equation}
\left\|u_\phi+\sigma(t)\lambda(t)\right\|^2
=\frac{4}{\sigma(t)^2}
\left\|v_\phi-v^{\mathrm{tgt}}\right\|^2.
\label{eq:app-velocity-weight}
\end{equation}
The weight is $4/\sigma(t)^2=2t/(1-t)$. Following \citet[Appendix~B.2]{ram}, we omit the weight and sample time uniformly in \eqref{eq:actor-loss}. 
For a frozen endpoint distribution and finite losses, positive time
weights preserve the pointwise conditional-mean minimizer in an
unrestricted velocity class, but can change the fit of a finite network.

The idealized Adjoint Sampling characterization requires both
population regression stationarity and consistency of the learned
dynamics with the reference bridges~\citep[Theorem~C.3]{adjointsampling}.
Here, consistency requires the ODE velocity to equal the flow-matching
velocity associated with its own endpoint distribution and
\eqref{eq:reference-bridge}. The target derivation alone establishes
neither this consistency nor convergence of finite-network training
or finite-step sampling to $q^\star$.

\subsection{Proposal-selection approximation}
\label{app:proposal-selection}

We justify replacing $V_\delta(s,a)$ by the critic value in selection
weights. Keep $\tau$ fixed and suppose $Q_\psi(s,\cdot)$ is bounded and
twice continuously differentiable near $a$.
For $F(x)=e^{\tau Q_\psi(s,x)}$ and $Y\sim\rho$, a second-order Taylor
expansion gives
\begin{equation}
\E[F(a+\delta Y)]
=F(a)+\frac{\delta^2}{2}\tr(\nabla_a^2F(a))+o(\delta^2).
\label{eq:app-partition-expansion}
\end{equation}
The linear term vanishes because $\E[Y]=0$, and the quadratic term
uses $\E[YY^\top]=I$. Continuity of the Hessian controls the local
remainder, and boundedness of $F$ and Gaussian tails make the contribution
outside that neighborhood $o(\delta^2)$.
Taking the logarithm and dividing by $\tau$ yields
\begin{equation}
V_\delta(s,a)
=Q_\psi(s,a)+\frac{\delta^2}{2}
\left[\tr(\nabla_a^2Q_\psi(s,a))
+\tau\|\nabla_aQ_\psi(s,a)\|^2\right]
+o(\delta^2).
\label{eq:app-local-value-expansion}
\end{equation}
Thus $V_\delta(s,a)=Q_\psi(s,a)+O(\delta^2)$, motivating the
leading-order weights $e^{\tau Q_\psi(s,a)}$ for small refinement scales.
For i.i.d.\ behavior proposals, weights
$Z(s,a_i)=e^{\tau V_\delta(s,a_i)}$ give a self-normalized
importance-sampling approximation to $\pi^{\mathrm{base},\star}$.
Since the weights are bounded and positive, the strong law of large
numbers gives almost-sure convergence of weighted expectations of
bounded test functions to their expectations under
$\pi^{\mathrm{base},\star}$ as the number of candidates increases.
Replacing $V_\delta$ by $Q_\psi$ gives the finite-temperature weights
$w_i\propto e^{\tau Q_\psi(s,a_i)}$ used in Section~\ref{sec:lift}.

Our implementation selects $I=\arg\max_i Q_\psi(s,a_i)$ before
refinement. Table~\ref{tab:abl-selection} compares greedy and
finite-temperature selection empirically.

\section{Extended Related Work}
\label{app:related}

\subsection{Policy extraction with diffusion and flow policies}

Q-Flow and RQL differ in how they learn values over intermediate
generative states. Q-Flow~\citep{qflow} fits an auxiliary value
network to critic evaluations at endpoints obtained by integrating
the current flow from intermediate states. Its policy update then
uses gradients of this intermediate value network.
RQL~\citep{rql} reconstructs virtual trajectories by running the flow
backward from dataset actions and uses multi-step returns to learn
values in the expanded MDP. PReFlow evaluates $Q_\psi$ at the final
refined action and uses its gradient to construct a velocity target
through the analytic reference adjoint. Its actor update requires
neither an intermediate value estimator nor value backups across
refinement steps.

QPILOTS~\citep{qpilots} estimates guidance from clean actions
conditioned on an intermediate flow state. QPILOTS-U uses a denoised
point estimate, whereas QPILOTS-M trains a meta flow map to produce
approximate posterior samples. The auxiliary model in QPILOTS-M
therefore supports critic-gradient estimation during sampling.
PReFlow's additional flow models refinements conditioned on a
behavior proposal. The critic gradient supervises this flow during
training, so the refinement rollout requires no critic gradients
at inference time.

\subsection{Behavior proposals and conditional action refinement}

SPAR~\citep{spar} evaluates flow-based residual policies in addition
to its conditional VAE formulation. Its latent self-imitation
procedure samples residual candidates from a target policy, weights
them using conservative advantage estimates, and fits the residual
policy to these weighted samples. Both the weights and sampled
residuals are detached during this update.
PReFlow specifies a conditional Gibbs target through its
proposal-centered KL objective. The critic gradient at a sampled
refined action determines a detached velocity target through
the analytic adjoint.

FLAG~\citep{flag} uses a local Gaussian policy around each
flow-generated anchor. With fixed covariance, the EM update moves
each anchor toward the mean of a reweighted local target.
Its expressive action marginal arises from mixing these local
policies over flow latents.
PReFlow's conditional flow can represent multiple refinement modes
for the same proposal; its Gaussian reference defines the KL
penalty rather than the policy family.

The regularization of refinements also differs across methods.
DeFlow constrains the displacement of a deterministic
refiner~\citep{deflow}, while Fisher Decorator penalizes a local
transport map using the behavior score~\citep{fidec}.
PReFlow's Gaussian KL regularizer balances expected squared
displacement against conditional entropy. It discourages large
edits without imposing a hard radius on individual refinements or
requiring an estimate of the behavior score.

EMaQ and decoupled policy extraction use critic values to select
among behavior proposals~\citep{emaq,dpe}.
PReFlow's joint objective weights proposals by their optimal
regularized refinement value, accounting for the value and KL cost
of subsequent refinement. At the joint optimum, the unclipped
action marginal is a Gibbs policy under the Gaussian-smoothed
behavior prior. The implementation uses critic-based argmax selection;
this rule does not in general sample the optimal proposal marginal
(Appendix~\ref{app:proposal-selection}).

\subsection{Adjoint matching and tractable reference dynamics}

QAM applies Adjoint Matching~\citep{am,qam} with a learned behavior
flow as the reference. Although its actor update avoids
backpropagation through the optimized policy's sampler, target
construction requires a backward lean-adjoint solve through the
behavior reference. TRQAM~\citep{dong2026trustregionqadjoint} adapts
the constraint through a path-space trust region, while
ME-AM~\citep{meam} modifies entropy regularization and introduces
a mixture behavior prior.

EAM~\citep{eam} constructs linear reference dynamics with
closed form lean adjoints. To preserve the original reward-tilted
pretrained distribution, it corrects the terminal cost for the
change of reference; the gradient of this correction depends on
the pretrained model's score.
PReFlow uses the $C=1$ member of EAM's reference family.
Its terminal law $\rho=\mathcal N(0,I)$ already matches the prior
in our conditional refinement objective, so the density correction
vanishes. The terminal cost is therefore
$-\tau Q_\psi(s,a+\delta y)$, while behavior information enters through
the proposals.

Adjoint Sampling~\citep{adjointsampling} uses reference bridges to
separate endpoint generation from the construction of intermediate
training samples. AMDP~\citep{maxentam} applies reciprocal adjoint
matching to maximum-entropy RL.
PReFlow uses this training construction for its conditional
refinement objective. Proposals and refinement endpoints are
generated by detached forward rollouts. Given an endpoint, the
analytic reference bridge and lean adjoint provide intermediate
samples and velocity targets without further trajectory simulation
or a backward solve. The simulation-free property applies to these
operations; generating the endpoints still requires forward rollouts.

Appendix~\ref{app:control-details} derives our velocity target and
time weighting, and states the consistency conditions underlying the
idealized Adjoint Sampling characterization~\citep[Theorem~C.3]{adjointsampling}.

\section{Additional Method Details}
\label{app:additional}

\subsection{Temperature Adaptation}
\label{app:autotau}

A fixed $\tau$ can lead to different deviations from the reference
across domains because the critic gradients have different scales.
We use a cheap local approximation of the conditional KL to guide
temperature adaptation.

\paragraph{Local KL approximation.}
For fixed $s$ and $a$, consider the linear approximation
$
Q_\psi(s,a+\delta y)\approx Q_\psi(s,a)+\delta g_a^\top y,
\;
g_a=\nabla_a Q_\psi(s,a).
$
With the Gaussian reference $\rho=\N(0,I)$, the corresponding
conditional target is $q^\star_{\mathrm{lin}}=\N(\tau\delta g_a,I)$.
Writing $d_{\mathrm{act}}=\dim(\cA)$, its KL per action dimension is
$
\frac{1}{d_{\mathrm{act}}}\operatorname{KL}(q^\star_{\mathrm{lin}}\|\rho)
=\frac{\tau^2\delta^2\|g_a\|^2}{2d_{\mathrm{act}}}
=\frac{1}{2}\left(
\frac{\E_{y\sim q^\star_{\mathrm{lin}}}\|y\|^2}{d_{\mathrm{act}}}-1
\right).
$
Under this approximation, estimating the KL requires only squared
endpoint norms. We use this expression as a cheap KL proxy for the learned refinement policy.

\paragraph{Temperature update.}
For endpoints sampled from the current
refinement policy, define the normalized second moment, or dispersion,
as
$
\mathrm{disp}=\frac{1}{d_{\mathrm{act}}}\E\|y_1\|^2.
$
The reference $\rho$ has dispersion one, regardless of the action
dimension or chunk length. For the linearized target, a dispersion
target $c\ge1$ therefore corresponds to a conditional KL of
$(c-1)/2$ per action dimension. We adjust $\tau$ to keep the measured dispersion close to $c$ using
$
\mathcal L_\tau
=\log\tau\left(\sg[\mathrm{disp}]-c\right).
$
We initialize $\tau$ to $10$ on \texttt{al}, \texttt{ag}, and
\texttt{p44}, and to $30$ on \texttt{c4}. The temperature is updated at every gradient step.

\subsection{Implementation details}
\label{app:impl}

\paragraph{Critic and behavior-flow updates.}
We follow QAM~\citep{qam} for the critic architecture, ensemble aggregation,
Bellman backup construction, and target-network updates. The behavior-flow
initialization and training schedule during offline and online learning
also follow QAM. Critic values and action gradients use the same ensemble
aggregation rules as QAM.
We train the critic with the same expectile regression objective as
RQL~\citep{rql}, using the domain-specific $\kappa$ values in
Appendix~\ref{app:hparams}.
Training proposals for the refinement-flow update are drawn directly
from $\pi_\beta$, without proposal selection, as in
\eqref{eq:actor-loss}. For each TD target, we also sample one proposal
from $\pi_\beta$ and refine it with $q_\phi$.
The $K$-candidate selection rule in
Algorithm~\ref{algo:preflow-action} is used for action generation.

\paragraph{Action selection and clipping.}
A single batched behavior-flow rollout generates the $K$ candidate
proposals.
Our benchmark policy chooses the candidate with the highest critic
value and samples a refinement conditioned on that proposal.
When $K=1$, this reduces to refining the only available proposal.
The resulting action $\tilde a=a+\delta y$ is clipped to the
environment's action bounds. 

\subsection{Training procedure}
\label{app:training-procedure}

Algorithm~\ref{algo:preflow-full} summarizes one joint update of the
critic, behavior flow, and refinement flow.
Following QAM~\citep{qam}, we use ten critics
$\{Q_{\psi^j}\}_{j=1}^{10}$ and their target networks
$\{Q_{\bar\psi^j}\}_{j=1}^{10}$.
Let $\bar Q_{\mathrm{mean}}$ and $\bar Q_{\mathrm{std}}$ denote
the mean and standard deviation of the target ensemble.
The pessimistic target value is
\[
    \bar Q(s,a)
    =
    \bar Q_{\mathrm{mean}}(s,a)
    -\rho_Q\bar Q_{\mathrm{std}}(s,a),
\]
where $\rho_Q$ corresponds to QAM's pessimism coefficient $\rho$.
The critic $Q_\psi$ used for proposal selection and refinement
denotes the mean of the critic ensemble.

We retain our expectile extension of QAM's squared TD-error loss:
$\ell_\kappa(e)
    =
    \left|\kappa-\mathbf{1}_{\{e<0\}}\right|e^2,$
where $e$ is the target minus the prediction.
The default $\kappa=0.5$ recovers squared-error regression up to
a constant factor; we use $\kappa=0.7$ on \texttt{c2} and
\texttt{c4}.
For an $h$-step backup, $r$ denotes the accumulated discounted
reward, $s'$ the bootstrap state, and $m$ the bootstrap mask.
The target therefore uses the discount $\gamma^h m$.
Losses below are averaged over the minibatch, and target
parameters are initialized as $\bar\psi\gets\psi$.

\begin{algorithm}[t]
\caption{PReFlow: Joint training update}
\label{algo:preflow-full}
\begin{algorithmic}
\Statex \textbf{Input:}
    Minibatch $\mathcal B$ of replay samples $(s,a,s',r,m)$,
    behavior flow $f_\beta$, refinement flow $v_\phi$,
    critic parameters $\psi$, target parameters $\bar\psi$,
    discount $\gamma$, backup horizon $h$, pessimism $\rho_Q$,
    target update rate $\lambda$,
    refinement scale $\delta$, inverse temperature $\tau$,
    and expectile $\kappa$.

\Statex \vspace{3pt}\textbf{Critic update}
\State Generate $\tilde a'$ at each $s'$ using
    Algorithm~\ref{algo:preflow-action} with $K=1$.
\State Clip $\tilde a'$ to the action bounds.
\State Update each critic $\psi^j$, $j=1,\ldots,10$,
    by minimizing
\Statex \hspace{\algorithmicindent}
    $\displaystyle
    \mathcal L(\psi^j)
    =
    \ell_\kappa\!\left(
        \operatorname{sg}\!\left[
            r+\gamma^h m\,\bar Q(s',\tilde a')
        \right]
        -Q_{\psi^j}(s,a)
    \right).
    $

\Statex \vspace{3pt}\textbf{Behavior-flow update}
\State Update $\beta$ using the flow-matching loss
    in~\eqref{eq:flow-matching} on $(s,a)$ from $\mathcal B$.

\Statex \vspace{3pt}\textbf{Refinement-flow update}
\State Update $\phi$ using Algorithm~\ref{algo:PReFlow}
    on the states in $\mathcal B$.
    \Comment{Training proposals use $K=1$}
\If{automatic temperature adaptation is enabled}
    \State Update $\log\tau$ as described in
        Appendix~\ref{app:autotau}, using the endpoints
        from the refinement update.
\EndIf

\Statex \vspace{3pt}\textbf{Target-network update}
\State $\bar\psi\gets(1-\lambda)\bar\psi+\lambda\psi$.

\Statex \vspace{3pt}\textbf{Output:}
    Updated $\beta,\phi,\psi,\bar\psi$, and $\tau$.
\end{algorithmic}
\end{algorithm}

\section{Experimental Details}
\label{app:experiments}

\subsection{Benchmark and evaluation protocol}
\label{app:protocol}

Our implementation builds on the official QAM codebase~\citep{qam},
and we follow its experimental protocol on the OGBench single-task
benchmark~\citep{ogbench}. Unless otherwise stated, we use QAM's
default architecture and optimization settings for the shared
components, including its domain-specific discount factors and critic
pessimism coefficients; see Appendices~E and~F of \citet{qam}.

We evaluate ten OGBench domains, with five tasks per domain
(50 tasks total):
\texttt{antmaze-large} (\texttt{al}),
\texttt{antmaze-giant} (\texttt{ag}),
\texttt{humanoidmaze-medium} (\texttt{hm}),
\texttt{humanoidmaze-large} (\texttt{hl}),
\texttt{scene},
\texttt{puzzle-3x3} (\texttt{p33}),
\texttt{puzzle-4x4} (\texttt{p44}),
\texttt{cube-double} (\texttt{c2}),
\texttt{cube-triple} (\texttt{c3}), and
\texttt{cube-quadruple} (\texttt{c4}).

Following \citet{qam}, we use the default \texttt{navigate} datasets
for \texttt{al}, \texttt{ag}, \texttt{hm}, and \texttt{hl}, and the
\texttt{play} datasets for the manipulation domains.
We use the 100M-transition datasets for \texttt{p44} and \texttt{c4},
and sparse rewards for \texttt{scene}, \texttt{p33}, and \texttt{p44}.
Action chunking uses $h=5$ for manipulation and $h=1$ for navigation.

The benchmark protocol consists of 1M offline
gradient steps followed by 500K online environment steps.
For our runs, we evaluate every 50K steps using 50 episodes per task
and seed, and report success rates at the end of each phase.
Domain scores average five tasks, and the overall score averages all
50 tasks.
For the main benchmark, PReFlow and our reproductions of TRQAM,
Q-Flow, and RQL use eight seeds per task.
Result sources and seed counts for the other baselines are given
in Appendix~\ref{app:baselines}.
We use three seeds for ablation studies and additional analyses,
unless otherwise stated.
The separate toy and ablation protocols are specified in
Appendices~\ref{app:toy-edit-family}, \ref{app:abl-edit-family},
and~\ref{app:selection-details}. 

\paragraph{Hardware.}
We ran our experiments on NVIDIA GPUs,
including RTX 3090, RTX A6000, RTX 4090, and RTX 5090 models.
Computational cost comparisons were measured
on a single NVIDIA RTX A6000 GPU,
with the timing protocol detailed in Appendix~\ref{app:timing}.

\subsection{PReFlow hyperparameters}
\label{app:hparams}

For our proposal-conditioned refinement velocity network, we use a four-layer MLP with 512 hidden
units per layer, where the state and proposal are concatenated as input. Both the behavior and refinement flows use 10 Euler
integration steps.

Table~\ref{tab:app-domain} reports the per-domain PReFlow settings. We search over fixed inverse temperatures $\tau\in\{3,10,30,100\}$ and dispersion targets $c\in\{1.5,2.0,2.5,5.0,15.0\}$ for automatic temperature adaptation (Appendix~\ref{app:autotau}). We also search over refinement scales $\delta\in\{0.05,0.1,0.2,0.5,0.9\}$ and numbers of behavior proposals $K\in\{1,2,4,8\}$. We evaluate each candidate configuration with three seeds per task, and select the configuration that maximizes the sum of the success rates at the end of offline training and online fine-tuning. We then fix this configuration for all five tasks in the domain and report results averaged over eight seeds per task.

We use QAM's squared TD-error objective~\citep{qam} as the default for critic learning. Following RQL~\citep{rql}, we also consider an asymmetric version that gives different weights to predictions below and above their TD targets. This expectile regression loss is controlled by $\kappa$: our default $\kappa=0.5$ recovers QAM's loss up to a constant factor, while $\kappa>0.5$ gives more weight to predictions below their targets. We use $\kappa=0.7$ only on \texttt{c2} and \texttt{c4}.

TRQAM~\citep{dong2026trustregionqadjoint} increases its KL budget at the offline-to-online transition on \texttt{ag}. Motivated by this choice, we test increasing PReFlow's refinement scale $\delta$ from $0.1$ during offline training to $0.2$ during online fine-tuning. We apply this adjustment only to \texttt{ag}. Both this adjustment and the expectile choices above are treated as domain-specific hyperparameter settings.

\begin{table}[t]
\centering
\begingroup
\footnotesize
\renewcommand{\arraystretch}{1.2}
\setlength{\tabcolsep}{3pt}
\begin{tabular}{@{}lcccl@{}}
\toprule
\texttt{\textbf{Domain}} & $\boldsymbol{\tau}$ & $\boldsymbol{\delta}$
& $\boldsymbol{K}$ & \texttt{\textbf{Notes}} \\
\midrule
\texttt{\textbf{al}}      & auto ($c{=}2.5$) & 0.1  & 1 & -- \\
\texttt{\textbf{ag}}      & auto ($c{=}1.5$) & 0.1  & 1
& $\delta_{\mathrm{online}}{=}0.2$ \\
\texttt{\textbf{hm}}   & 30               & 0.1  & 4 & -- \\
\texttt{\textbf{hl}} & 100              & 0.05 & 4 & -- \\
\texttt{\textbf{scene}}              & 3                & 0.1  & 4 & sparse \\
\texttt{\textbf{p33}}         & 3                & 0.1  & 4 & sparse \\
\texttt{\textbf{p44}}         & auto ($c{=}2.0$) & 0.9  & 1 & sparse, $100M$ \\
\texttt{\textbf{c2}}        & 3                & 0.1  & 8 & $\kappa=0.7$ \\
\texttt{\textbf{c3}}        & 30               & 0.1  & 1 & -- \\
\texttt{\textbf{c4}}     & auto ($c{=}15$)  & 0.1  & 4 & $\kappa=0.7$, $100M$ \\
\bottomrule
\end{tabular}
\endgroup
\caption{
\textbf{Per-domain PReFlow settings.}
``auto ($c$)'' denotes the temperature adaptation of Appendix~\ref{app:autotau}
with target $c$.
}
\label{tab:app-domain}
\end{table}

\paragraph{Sensitivity sweep.}
For Figure~\ref{fig:sensitivity}, we vary $\delta$ and fixed
$\tau$ with $K=1$ and $\kappa=0.5$, without temperature adaptation.
We evaluate one task per domain, using task~1 for navigation
and task~2 for manipulation.
For \texttt{c4}, all tested configurations yielded zero success
on task~2, so we instead use task~1 to illustrate performance
differences across hyperparameter settings.
Each configuration is trained for 1M offline gradient steps
without online fine-tuning.
Each cell reports the mean success rate over three seeds
at the 1M-step checkpoint.

\subsection{Baselines}
\label{app:baselines}

For ReBRAC, FBRAC, BAM, FQL, FAWAC, CGQL and its variants, DAC, QSM,
DSRL, FEdit, IFQL, and QAM and its variants, we use the released results
from QAM~\citep{qam}, with 12 seeds per task.
Table~\ref{tab:main_combined} reports a subset of these methods;
the full task-level tables additionally include FBRAC, BAM, FQL,
FAWAC, CGQL-M, and CGQL-L.
We refer to Appendices~E and~F of \citet{qam} for their implementations
and hyperparameter settings.
We re-run TRQAM, Q-Flow, and RQL using the protocol in
Appendix~\ref{app:protocol}, with eight seeds per task.
For QPILOTS-M and QPILOTS-U, we use the authors' released results
with eight seeds per task, as detailed below.

\paragraph{TRQAM.}
TRQAM~\citep{dong2026trustregionqadjoint} extends QAM with a path-space
KL constraint relative to the behavior flow. It adjusts a trust-region
parameter in the sampling dynamics through projected dual updates,
using the estimated KL along sampled trajectories to track a prescribed
budget $\epsilon_{\mathrm{KL}}$.

We use the authors' reported budget for $\epsilon_{\mathrm{KL}}$
except \texttt{hm} and \texttt{hl}. Using the published budget of $\epsilon_{\mathrm{KL}}=0.5$ left the KL constraint unsatisfied under our protocol, with the dual variable
reaching its optimization boundary. 

Following the authors' tuning protocol, we
sweep $\epsilon_{\mathrm{KL}}\in\{1.0,2.0,4.0\}$ on tasks~1 and~4
using two seeds per configuration.
We select $\epsilon_{\mathrm{KL}}=1.0$ for both domains and fix it
across all five tasks. We do not use BC pre-training.

\paragraph{Q-Flow.}
Q-Flow~\citep{qflow} learns a value function over intermediate flow
states. For a sampled intermediate state, it runs the policy to the
endpoint and uses the terminal critic value as a regression target.
The policy is then trained by adding the gradient of this intermediate
value function to the standard flow-matching target, avoiding
backpropagation through the full flow rollout.

We use the authors' released implementation.
The original paper averages offline evaluations at 800K, 900K,
and 1M gradient steps. We instead report the evaluation at 1M steps
to match the other methods in our comparison.

\paragraph{RQL.}
RQL~\citep{rql} treats individual flow-refinement steps as actions in
an expanded MDP. It reverses flows from dataset actions to construct
virtual trajectories for off-policy learning, and reduces the effective value-learning horizon introduced by the refinement steps.
This allows it to train the full flow policy without backpropagating
through the entire sampling process.

We extend the implementation to support online data collection and the
streaming OGBench datasets, and use the authors' reported per-domain
hyperparameters.

\paragraph{QPILOTS.}
QPILOTS~\citep{qpilots} steers a behavior flow at inference time by
computing critic guidance from clean-action estimates at each flow step.
QPILOTS-U uses a Tweedie point estimate, while QPILOTS-M uses approximate
samples from the conditional endpoint distribution generated by an
auxiliary meta flow map.

We use the authors' released results for both variants, based on eight
seeds per task and the hyperparameter settings reported in their paper.

\subsection{Toy example: conditional multimodality}
\label{app:toy-edit-family}

This example illustrates the conditional refinement policies in
Section~\ref{sec:refinement} and Figure~\ref{fig:toy_qvpo}.
We adapt the three-peak continuous bandit of QVPO~\citep{qvpo}
and use its reward as a fixed critic, omitting the state argument:
\[
Q(\tilde a)=\sum_{i=1}^{3}\frac{w}{2\pi\sigma^2}
\exp\left(-\frac{\|\tilde a-\mu_i\|^2}{2\sigma^2}\right).
\]
We use $w=1.5$, $\mu_1=(-1.35,0.65)$, $\mu_2=(-0.65,1.35)$,
$\mu_3=(-1.61,1.61)$, and $\sigma=0.2$ in place of the original $0.1$.
We fix the proposal $a=(-1.40,1.40)$, the refinement scale
$\delta=0.5$, and the inverse temperature $\tau=2$.
Writing $\Delta a=\delta y$, the conditional objective
in~\eqref{eq:conditional-objective} gives the target
\[
q^\star(y)\propto \N(y;0,I)\,e^{\tau Q(a+\delta y)}.
\]

We compare flow refinement, full-covariance Gaussian refinement,
and QAM~\citep{qam} on the same problem.
The two refinement policies use the same conditional objective,
critic, proposal, reference, $\delta$, and $\tau$.
All three methods are trained for 100K steps using Adam with
global-norm gradient clipping at $1.0$.
The initial learning rate is $\eta_0=5\times10^{-4}$,
reduced to $\eta_0/10$ at 60\% of training
and $\eta_0/30$ at 90\%.
The flow and Gaussian use batch sizes of $4096$ and $32{,}768$,
respectively.
The flow uses a $3\times256$ MLP with a zero-initialized output
layer and 32 Euler integration steps.
The Gaussian is trained using reparameterization gradients
of the same conditional objective.
For QAM, the behavior policy is set to the reference law
$\pi_\beta=\N(a,\delta^2 I)$.
We realize it as a rectified flow from $\N(0,I)$
with a closed form velocity field.
QAM fine-tunes this flow in action space by adjoint matching
using~\eqref{eq:memoryless-sde}-\eqref{eq:adjoint matching}
with the same critic and $\tau$,
and uses the same batch size, network, and number of
discretization steps as the flow refinement.
Its target density is proportional to
$\pi_\beta(\tilde a)e^{\tau Q(\tilde a)}$
and corresponds to $q^\star$ under $\tilde a=a+\delta y$. 

Figure~\ref{fig:toy-kl} tracks the KL divergence from each learned
distribution $q$ to the target during training,
$D_{\mathrm{KL}}(q\,\|\,q^\star)
=\mathbb E_{y\sim q}\bigl[\log q(y)-\log q^\star(y)\bigr]$,
which we estimate with 100{,}000 samples from $q$.
Flow refinement attains the lowest mean KL at every evaluated step and ends below $0.1$. 

\begin{figure}[t]
    \centering
    \includegraphics[width=0.6\linewidth]{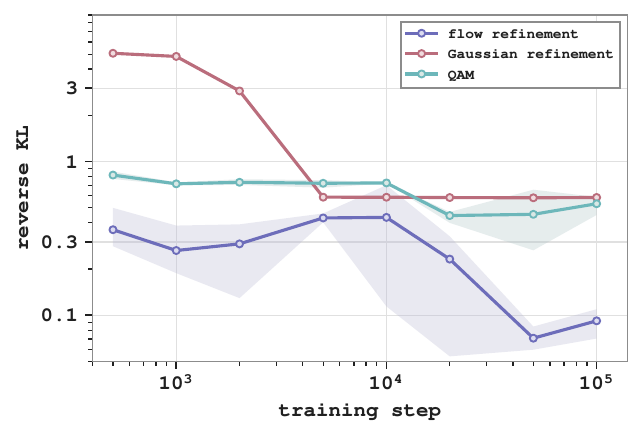}
    \captionsetup{font=small,skip=4pt}
    \caption{
    \textbf{KL divergence to the target on the toy problem.}
    Lines show means over 3 seeds, and shaded regions show
    one standard deviation. Both axes are logarithmic.
    }
    \label{fig:toy-kl}
\end{figure}

\subsection{Refinement policy class on OGBench}
\label{app:abl-edit-family}

We describe the refinement policy comparison in
Table~\ref{tab:abl-edit-family} of Sec.~\ref{sec:exp-refinement}.
We use a learned diagonal Gaussian policy,
$q_\phi(y\mid s,a)=\mathcal N(\mu_\phi(s,a),
\operatorname{diag}(\sigma_\phi^2(s,a)))$.
The learned Gaussian uses the same state and proposal inputs and MLP
hidden-layer sizes as the refinement flow, and is trained to maximize
\[
\tau\,\mathbb E_{y\sim q_\phi(\cdot\mid s,a)}
\left[Q_\psi(s,a+\delta y)\right]
-\mathrm{KL}\!\left(
q_\phi(\cdot\mid s,a)\|\mathcal N(0,I)
\right).
\] 
We also include a variant without refinement ($\delta=0$).
The learned Gaussian and flow both use $\delta=0.1$; the
no-refinement variant returns the behavior proposal directly. 
All variants use $K=1$.
We evaluate on \texttt{al}, \texttt{hm}, \texttt{scene}, and
\texttt{c2}, with five tasks each and three seeds per task. Both refiners use fixed inverse temperatures of $100$, $30$, $3$,
and $3$ on \texttt{al}, \texttt{hm}, \texttt{scene}, and
\texttt{c2}, respectively. We use $\kappa=0.7$ on \texttt{c2} and $\kappa=0.5$ elsewhere. We report offline success rates after 1M gradient steps, evaluated
over 50 episodes per task and seed.

\subsection{Proposal selection}
\label{app:selection-details}

We compare three rules for selecting among $K$ behavior proposals:
uniform sampling, sampling with probabilities proportional to
$\exp(\tau Q_\psi(s,a_i))$, and selecting the
proposal with the highest critic value.
The selected proposal is then refined using the learned refinement flow.
For softmax selection, we use the domain-specific $\tau$ values in
Table~\ref{tab:app-domain}.
Each setting is evaluated over three seeds, with each seed
covering all five tasks in a domain, as reported in
Table~\ref{tab:abl-selection}.

\subsection{Timing methodology}
\label{app:timing}

Figure~\ref{fig:flowstep-cost} reports mean full-training-update
times on a single NVIDIA RTX A6000 GPU, averaged over five flow-step
counts: 5, 10, 20, 50, and 100.

Table~\ref{tab:speed} provides a separate timing breakdown at 10
Euler steps. We use
\texttt{c2} input shapes: 37-dimensional observations and
5-dimensional actions with chunk length $h=5$ (25-dimensional action
chunks), batch size 256, and 10 Euler steps per flow rollout.
Each method runs in a separate subprocess. After five warm-up calls,
we time 30 calls with device synchronization via
\texttt{jax.block\_until\_ready} and report the median in milliseconds.

\section{Additional Experimental Results}
\label{app:results} 


\subsection{Comparison with SPAR}
\label{app:spar}

SPAR~\citep{spar} also refines an anchor action with a residual policy.
It trains a conditional VAE over the residual by advantage-weighted
reconstruction and latent self-imitation. We port the authors'
implementation to the QAM codebase and keep SPAR's own critic, anchor,
and action-selection rule. 

SPAR first trains an IQL critic and a
tanh-Gaussian anchor for 1M gradient steps, and then trains the
residual for 1M steps with both networks frozen. At evaluation, it
executes the best of ten residual candidates only when their robust
critic value exceeds that of the anchor. Network sizes, action
chunking, discount factors, and the evaluation protocol match PReFlow.
We tune the anchor (Gaussian or behavior flow), the advantage filter
(hard or soft), the guide weight $\lambda_g\in\{0.5,1,2\}$, and the
ensemble penalty $\lambda_u\in\{0,0.5,1\}$ on \texttt{al}, \texttt{hm},
\texttt{scene}, and \texttt{c2} with three seeds per task. 

The paper's
main configuration (Gaussian anchor, hard filter, $\lambda_g=1$,
$\lambda_u=0.5$) gives the highest average, so we run it with eight
seeds per task. Table~\ref{tab:spar} reports offline success rates
after 1M gradient steps of residual training. SPAR improves on its
anchor on \texttt{al} and \texttt{scene}, but it remains below PReFlow
in all four domains. 

\begin{table}
\centering
\begingroup

\def\SPARbest#1{{%
  \setlength{\fboxsep}{0.6pt}%
  \colorbox{background}{\strut #1}}}
\def\SPARstat#1#2{%
  \shortstack[c]{%
    {\fontsize{5.5}{6.2}\selectfont\textcolor{gray}{[#2]}}%
    \\[-1pt]\vphantom{\SPARbest{0}}#1}}

\fontsize{7.5}{9.0}\selectfont
\setlength{\tabcolsep}{8pt}
\renewcommand{\arraystretch}{1.2}

\begin{tabular}{@{}lcccc@{}}
\toprule
& \texttt{\textbf{al}}
& \texttt{\textbf{hm}}
& \texttt{\textbf{scene}}
& \texttt{\textbf{c2}} \\
Method
& {\color{gray}\fontsize{5.5}{6.2}\selectfont 5 tasks}
& {\color{gray}\fontsize{5.5}{6.2}\selectfont 5 tasks}
& {\color{gray}\fontsize{5.5}{6.2}\selectfont 5 tasks}
& {\color{gray}\fontsize{5.5}{6.2}\selectfont 5 tasks} \\
\midrule
\texttt{\textbf{SPAR anchor}}
& \SPARstat{48}{43,52}
& \SPARstat{7}{5,8}
& \SPARstat{14}{12,15}
& \SPARstat{3}{2,4} \\
\texttt{\textbf{SPAR}}
& \SPARstat{60}{55,64}
& \SPARstat{8}{6,9}
& \SPARstat{27}{26,29}
& \SPARstat{3}{2,4} \\
\texttt{\textbf{PReFlow}}
& \SPARstat{\SPARbest{82}}{79,85}
& \SPARstat{\SPARbest{65}}{60,69}
& \SPARstat{\SPARbest{97}}{96,97}
& \SPARstat{\SPARbest{59}}{56,62} \\
\bottomrule
\end{tabular}
\endgroup

\caption{
\textbf{Comparison with SPAR.}
Offline success rates (\%) averaged over eight seeds and five tasks
per domain, with 95\% confidence intervals in small gray brackets
above the means. Best results per domain are highlighted.
SPAR is evaluated after 1M gradient steps of critic and anchor
training followed by 1M steps of residual training.
}
\label{tab:spar}
\end{table}

\subsection{Computational cost}
\label{app:timing-results}

Table~\ref{tab:speed} provides the timing breakdown at 10 Euler steps.
The timing methodology is described in Appendix~\ref{app:timing}.

\begin{table}
    \centering
    \begingroup
    \footnotesize
    \renewcommand{\arraystretch}{1.2}
    \setlength{\tabcolsep}{3pt}
    \begin{tabular}{@{}lrrr@{}}
    \toprule
    Method & \texttt{\textbf{Actor grad.}}
    & \texttt{\textbf{Critic grad.}} & \texttt{\textbf{Full step}} \\
    \midrule
    \texttt{\textbf{QAM}}         &  3.40 & \cellbg 2.13 &  8.22 \\
    \texttt{\textbf{QAM-E}}       &  5.08 &  2.71 &  9.97 \\
    \texttt{\textbf{PReFlow}} (ours) &  2.65 &  2.43 & \cellbg 7.18 \\
    \texttt{\textbf{QPILOTS-U}}   & \cellbg 0.84 &  6.99 & 10.93 \\
    \texttt{\textbf{QPILOTS-M}}   &  0.91 & 49.69 & 51.20 \\
    \bottomrule
    \end{tabular}
    \endgroup
    \caption{
\textbf{Training-update cost on \texttt{c2} (ms).}
Median times over 30 calls after warm-up, measured on the same GPU.
Actor and critic columns measure loss-gradient computation without
optimizer updates; the full step includes all training updates.
\best{The lowest time in each column is highlighted}.
}
    \label{tab:speed}
\end{table}

\subsection{Effect of Proposal Selection}
\label{app:abl-selection}

Figure~\ref{fig:selection_ablation_full} provides the full learning
curves for the experiment in Section~\ref{sec:exp-selection} across
all ten OGBench domains.  

Table~\ref{tab:abl-selection} shows that critic-guided selection improves
performance over uniform selection on \texttt{c2} and
\texttt{scene}, while the improvement on \texttt{hm}
is limited.
Argmax achieves the highest mean success rate in four of the six
domain-$K$ settings and remains competitive in the other two.
Based on these results, we use argmax selection in our benchmark
experiments.
Appendix~\ref{app:proposal-selection} distinguishes greedy selection
from the finite-temperature policy derived from the joint objective.

\begin{table}[!htbp]
\centering
\begingroup
\footnotesize
\def\PReFlowSelectionStat#1#2{#1{\color{gray}\scriptsize$\pm$#2}}
\renewcommand{\arraystretch}{1.2}
\setlength{\tabcolsep}{3pt}
\begin{tabular}{@{}lccccccc@{}}
\toprule
& & \multicolumn{3}{c}{\color{gray}$\boldsymbol{K=4}$}
& \multicolumn{3}{c}{\color{gray}$\boldsymbol{K=8}$} \\
\cmidrule(lr){3-5}
\cmidrule(lr){6-8}
\texttt{\textbf{Domain}} & $\boldsymbol{K=1}$
& \texttt{\textbf{Uniform}} & \texttt{\textbf{Softmax}} & \texttt{\textbf{Argmax}}
& \texttt{\textbf{Uniform}} & \texttt{\textbf{Softmax}} & \texttt{\textbf{Argmax}} \\
\midrule
\texttt{\textbf{hm}} {\color{gray}\scriptsize($\tau=30$)}
& \PReFlowSelectionStat{69.2}{1.2}
& \PReFlowSelectionStat{68.8}{2.6}
& \cellbg\PReFlowSelectionStat{70.1}{3.1}
& \PReFlowSelectionStat{70.0}{2.9}
& \PReFlowSelectionStat{68.7}{0.8}
& \PReFlowSelectionStat{66.0}{3.3}
& \cellbg\PReFlowSelectionStat{69.2}{3.4} \\
\texttt{\textbf{scene}} {\color{gray}\scriptsize($\tau=3$)}
& \PReFlowSelectionStat{73.1}{0.8}
& \PReFlowSelectionStat{74.0}{1.7}
& \PReFlowSelectionStat{89.5}{0.8}
& \cellbg\PReFlowSelectionStat{94.9}{1.0}
& \PReFlowSelectionStat{74.1}{0.8}
& \PReFlowSelectionStat{90.7}{1.7}
& \cellbg\PReFlowSelectionStat{92.8}{0.6} \\
\texttt{\textbf{c2}} {\color{gray}\scriptsize($\tau=3$)}
& \PReFlowSelectionStat{53.9}{3.3}
& \PReFlowSelectionStat{50.3}{6.9}
& \PReFlowSelectionStat{61.3}{1.0}
& \cellbg\PReFlowSelectionStat{63.3}{4.2}
& \PReFlowSelectionStat{49.6}{8.0}
& \cellbg\PReFlowSelectionStat{60.1}{3.9}
& \PReFlowSelectionStat{58.7}{4.1} \\
\bottomrule
\end{tabular}
\endgroup
\caption{
\textbf{Effect of the selection rule.}
Offline success rates (\%) after 1M gradient steps.
Entries report mean $\pm$ standard deviation over three seeds,
with each seed's score averaged over five tasks.
The listed $\tau$ values apply to softmax selection.
\best{Best results are highlighted} among the three rules for each domain and $K$.
}
\label{tab:abl-selection}
\end{table}

\subsection{Proposal selection without refinement}
\label{app:abl-norefine}

To separate the contributions of proposal selection and refinement,
we repeat the $K$ sweep of Section~\ref{sec:exp-selection} with
refinement disabled ($\delta=0$), so that the selected behavior
proposal is executed directly. All other settings follow
Section~\ref{sec:exp-selection}.
Figure~\ref{fig:abl-norefine} reports offline success rates averaged
over four domains (\texttt{al}, \texttt{hm}, \texttt{scene}, \texttt{c2}).
Without refinement, $K=1$ reduces to the behavior policy and scores
near zero. Selecting among more proposals helps, but remains far below refinement with a single proposal.
Refinement therefore accounts for most of the improvement, while
proposal selection complements it.

\begin{figure}[t]
    \centering
    \includegraphics[width=\linewidth]{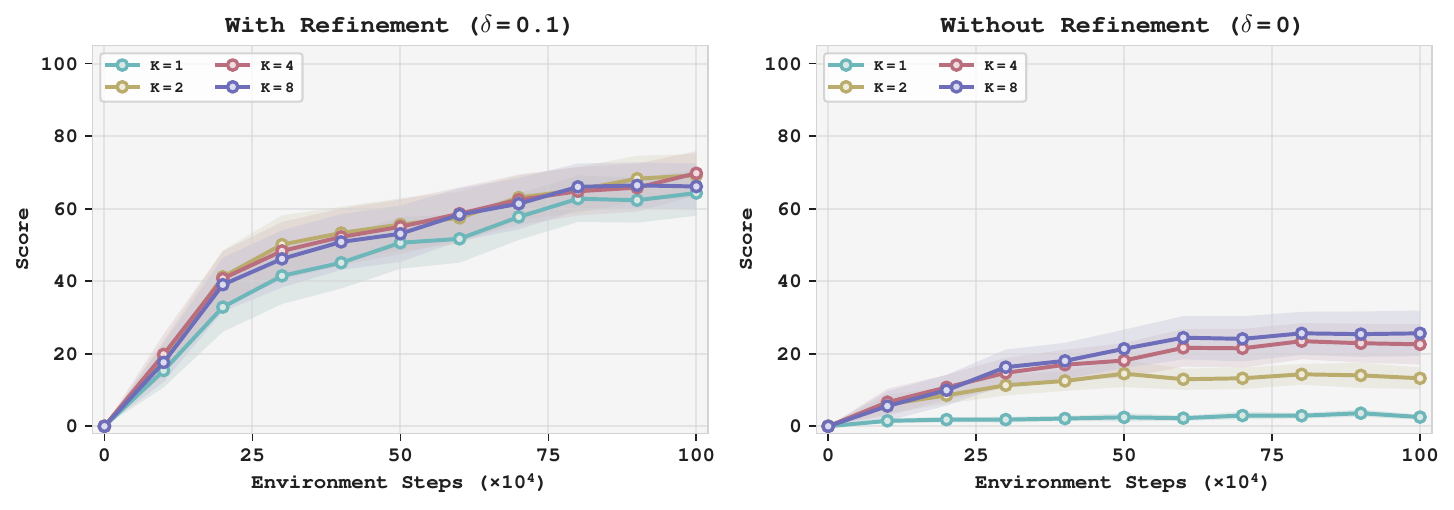}
    \captionsetup{font=small,skip=4pt}
    \caption{
    \textbf{Proposal selection with and without refinement.}
    Offline success rates average on \texttt{c2, hm, scene, al} for
    $K\in\{1,2,4,8\}$, with refinement ($\delta=0.1$, left) and
    without refinement ($\delta=0$, right).
    Without refinement, the selected proposal is executed directly. Curves show means over three seeds; shaded regions show pointwise
95\% confidence intervals.
    }
    \label{fig:abl-norefine}
\end{figure}

\begin{figure}[t]
    \centering
    \includegraphics[width=\linewidth]{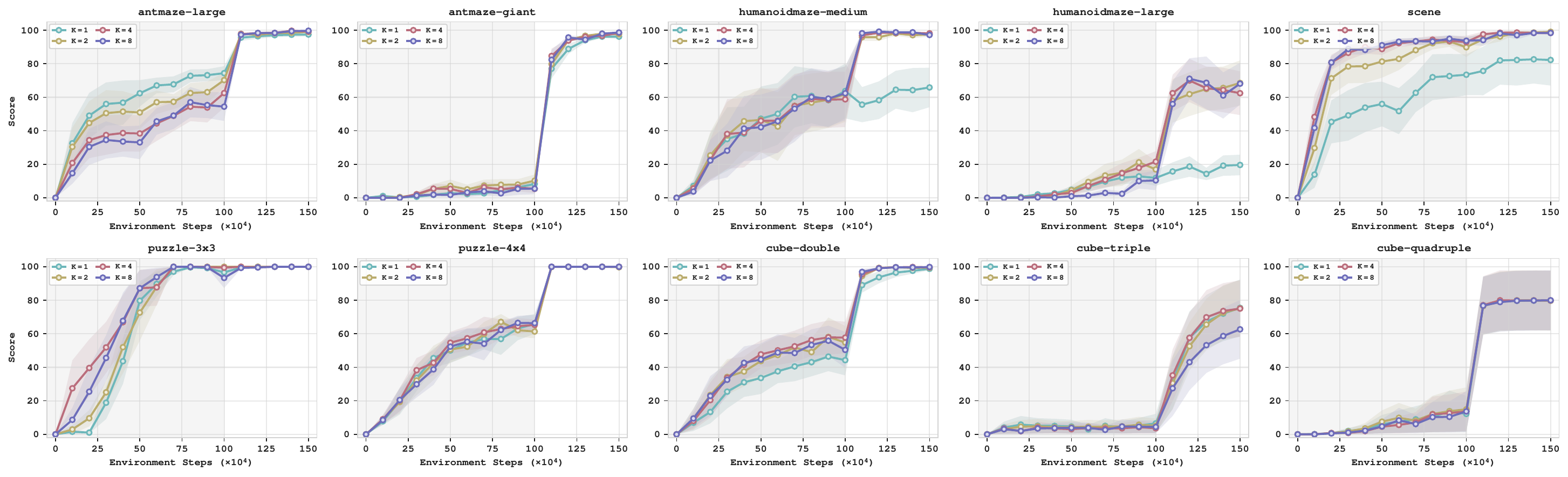}
    \captionsetup{font=small,skip=4pt}
    \caption{
    \textbf{Effect of proposal selection across all ten OGBench domains.}
    PReFlow with $K\in\{1,2,4,8\}$ behavior proposals.
    All settings use the learned refinement flow to refine the proposal
    with the highest critic value.
    Curves show means over three seeds; shaded regions show pointwise
95\% confidence intervals. 
    }
    \label{fig:selection_ablation_full}
\end{figure}

\subsection{Domain-specific hyperparameter choices}
\label{app:hparam-results}

The hyperparameter settings are given in Appendix~\ref{app:hparams}.

\paragraph{Critic expectile.}

With all other hyperparameters fixed, increasing $\kappa$ from $0.5$ to $0.7$ improves offline performance by 4.9 and 5.9 percentage points on \texttt{c2} and \texttt{c4}, respectively. This change does not consistently improve performance on the other domains. We hypothesize that increased critic overestimation limits its benefit on those domains.

\paragraph{Refinement scale during online fine-tuning.}

Increasing the refinement scale $\delta$ from $0.1$ during offline
training to $0.2$ during online fine-tuning improves online performance
on \texttt{ag}, but does not yield significant gains on other domains.

\subsection{Full task-level results}
\label{app:full-results}

Tables~\ref{tab:task_offline} and~\ref{tab:task_online} report success
rates on all 50 tasks at the end of offline training and after online
fine-tuning, respectively. PReFlow, TRQAM, Q-Flow, and RQL use eight
seeds per task under our common protocol. For the baselines
evaluated by \citet{qam}, we use their released results with 12 seeds
per task. 
Each task cell reports the mean success rate with a 95\% confidence
interval over seeds. Aggregate rows report the mean across the five
tasks in each domain. Figure~\ref{fig:full_training_curves} provides
the task-level learning curves.
QPILOTS-M/U are included in the domain-level comparison in
Table~\ref{tab:main_combined}, but not in these task-level tables;
their result sources are given in Appendix~\ref{app:baselines}.  

\subsection{Full training curves}
\label{app:full-curves}

Figure~\ref{fig:full_training_curves} presents the training curves of
PReFlow on all 50 tasks, together with the curves of the baselines
evaluated by \citet{qam}, plotted from their released results.
Online fine-tuning begins after 1M offline gradient steps and continues
for 500K environment steps.

\providecolor{ourblue}{RGB}{0,94,158}
\providecolor{oursbg}{RGB}{232,246,250}

\newcommand{\PReFlowTaskMean}[2]{%
  \def\PReFlowtmp{#2}\def\PReFlowdash{-}%
  \ifx\PReFlowtmp\PReFlowdash
    \text{--}%
  \else
    \ifnum#1=#2\relax
      \textcolor{ourblue}{\textbf{#2}}%
    \else
      #2%
    \fi
  \fi
}
\def\PReFlowTaskCell#1#2/#3/#4\PReFlowCellStop{%
  \if\relax\detokenize{#3}\relax
    \PReFlowTaskMean{#1}{#2}%
  \else
    $\overset{\text{{\fontsize{3.6}{4}\selectfont$[#3,#4]$}}}%
      {\text{\PReFlowTaskMean{#1}{#2}}}$%
  \fi
}
\def\PReFlowTaskCells#1#2,#3\PReFlowListStop{%
  &\if\relax\detokenize{#3}\relax
    \cellcolor{oursbg}\PReFlowTaskCell{#1}#2\PReFlowCellStop
  \else
    \PReFlowTaskCell{#1}#2\PReFlowCellStop
    \PReFlowTaskCells{#1}#3\PReFlowListStop
  \fi
}
\newcommand{\PReFlowTaskRow}[3]{%
  &\texttt{#1}\PReFlowTaskCells{#2}#3,\PReFlowListStop\\
}
\newcommand{\PReFlowTaskAgg}[3]{%
  \multirow{-6}{*}{\texttt{#1}}&\texttt{agg. (5 tasks)}%
  \PReFlowTaskCells{#2}#3,\PReFlowListStop\\
}

\begin{table}[p]
\centering
\begingroup
\newcommand{\PReFlowTaskBest}[1]{{%
  \setlength{\fboxsep}{0.4pt}%
  \colorbox{background}{#1}}}
\renewcommand{\PReFlowTaskMean}[2]{%
  \def\PReFlowtmp{#2}\def\PReFlowdash{-}%
  \ifx\PReFlowtmp\PReFlowdash
    \text{--}%
  \else
    \ifnum#1=#2\relax
      \PReFlowTaskBest{#2}%
    \else
      #2%
    \fi
  \fi
}
\def\PReFlowTaskCell#1#2/#3/#4\PReFlowCellStop{%
  \if\relax\detokenize{#3}\relax
    \vphantom{\PReFlowTaskBest{0}}\PReFlowTaskMean{#1}{#2}%
  \else
    \shortstack[c]{%
      {\fontsize{5.3}{5.8}\selectfont\textcolor{gray}{[#3,#4]}}%
      \\[-1.4pt]\vphantom{\PReFlowTaskBest{0}}\PReFlowTaskMean{#1}{#2}}%
  \fi
}
\def\PReFlowTaskCells#1#2,#3\PReFlowListStop{%
  &\PReFlowTaskCell{#1}#2\PReFlowCellStop
  \if\relax\detokenize{#3}\relax
  \else
    \PReFlowTaskCells{#1}#3\PReFlowListStop
  \fi
}
\renewcommand{\PReFlowTaskRow}[3]{%
  \texttt{\textbf{#1}}\PReFlowTaskCells{#2}#3,\PReFlowListStop\\
}
\renewcommand{\PReFlowTaskAgg}[3]{%
  \texttt{\textbf{agg.}}\PReFlowTaskCells{#2}#3,\PReFlowListStop\\
}
\newcommand{\PReFlowTaskGroup}[1]{%
  \addlinespace[2pt]
  \multicolumn{21}{@{}l@{}}{%
    {\color{gray}\ttfamily\bfseries #1}}\\[-1pt]
}
\newcommand{\PReFlowTaskHead}[1]{%
  {\fontsize{7.0}{8.0}\selectfont\ttfamily\bfseries #1}%
}
\fontsize{8.0}{9.0}\selectfont
\setlength{\tabcolsep}{5.5pt}
\renewcommand{\arraystretch}{1.0}
\maxsizebox*{\linewidth}{0.86\textheight}{%
\begin{tabular}{@{}l*{20}{c}@{}}
\toprule
\PReFlowTaskHead{Task} & \PReFlowTaskHead{ReBRAC} & \PReFlowTaskHead{FBRAC} & \PReFlowTaskHead{BAM} & \PReFlowTaskHead{FQL} & \PReFlowTaskHead{FAWAC} & \PReFlowTaskHead{CGQL} & \PReFlowTaskHead{CGQL-M} & \PReFlowTaskHead{CGQL-L} & \PReFlowTaskHead{DAC} & \PReFlowTaskHead{QSM} & \PReFlowTaskHead{DSRL} & \PReFlowTaskHead{FEdit} & \PReFlowTaskHead{IFQL} & \PReFlowTaskHead{QAM} & \PReFlowTaskHead{QAM-F} & \PReFlowTaskHead{QAM-E} & \PReFlowTaskHead{TRQAM} & \PReFlowTaskHead{Q-Flow} & \PReFlowTaskHead{RQL} & \PReFlowTaskHead{PReFlow} \\
\midrule
\PReFlowTaskGroup{al}
\PReFlowTaskRow{task1}{98}{98/97/99,0/0/0,90/88/93,93/89/96,6/3/9,63/49/76,56/49/63,39/31/47,88/83/92,87/79/94,62/53/71,67/60/73,41/27/54,77/67/85,85/80/89,87/83/91,85/80/88,96/94/98,80/77/84,79/67/90}
\PReFlowTaskRow{task2}{92}{88/85/91,0/0/0,60/54/65,86/83/89,1/0/2,73/64/79,49/45/54,51/47/55,71/67/75,78/71/85,75/68/82,67/64/70,14/10/19,80/76/83,64/59/68,81/78/85,62/52/72,92/89/95,78/73/82,78/71/83}
\PReFlowTaskRow{task3}{99}{98/97/99,12/5/20,94/92/96,59/48/68,40/32/48,91/89/94,90/87/93,79/75/82,98/97/99,99/98/100,81/77/85,65/52/77,54/44/63,89/86/93,96/93/97,94/92/96,93/92/94,98/96/100,94/91/97,90/85/95}
\PReFlowTaskRow{task4}{94}{94/92/95,0/0/0,85/83/88,54/41/68,18/15/20,78/74/83,78/75/81,76/73/80,91/88/94,91/88/94,18/5/32,28/20/36,25/20/29,69/52/81,81/76/86,70/63/76,80/76/85,92/90/94,80/74/84,79/74/83}
\PReFlowTaskRow{task5}{96}{95/94/96,0/0/0,88/86/91,86/83/88,22/16/28,75/69/81,80/76/84,78/75/81,92/90/94,96/94/97,69/65/73,63/56/69,45/33/55,89/86/91,87/86/89,83/79/87,91/86/94,96/94/98,76/73/80,82/78/87}
\PReFlowTaskAgg{al}{95}{94/94/95,2/1/4,84/82/85,76/72/79,17/15/19,76/73/80,71/68/73,65/62/67,88/86/90,90/88/92,61/56/66,58/54/62,36/32/39,81/78/84,83/81/84,83/80/86,82/80/85,95/94/96,82/80/84,82/79/85}
\PReFlowTaskGroup{ag}
\PReFlowTaskRow{task1}{62}{36/31/41,0/0/0,3/0/9,0/0/0,0/0/0,2/0/8,0/0/0,0/0/0,53/45/61,62/49/73,0/0/0,0/0/0,0/0/0,7/5/9,11/8/15,0/0/0,2/0/3,11/6/18,15/8/22,5/3/7}
\PReFlowTaskRow{task2}{73}{73/68/78,0/0/0,0/0/0,0/0/0,0/0/0,0/0/0,0/0/0,4/2/6,0/0/0,1/0/2,0/0/0,0/0/0,0/0/0,0/0/0,0/0/0,0/0/0,8/4/10,42/29/55,56/38/72,22/15/29}
\PReFlowTaskRow{task3}{47}{13/9/18,0/0/0,0/0/0,0/0/0,0/0/0,0/0/0,0/0/0,0/0/0,0/0/0,0/0/0,0/0/0,0/0/0,1/0/2,0/0/1,0/0/0,0/0/0,0/0/1,4/1/6,47/38/56,0/0/1}
\PReFlowTaskRow{task4}{74}{74/59/84,0/0/0,0/0/0,0/0/0,0/0/0,0/0/0,0/0/0,0/0/0,0/0/0,0/0/1,12/5/19,10/3/17,2/1/4,34/25/43,5/0/15,0/0/0,7/4/9,50/44/58,23/15/32,7/2/14}
\PReFlowTaskRow{task5}{89}{89/85/92,0/0/0,1/0/3,0/0/0,0/0/0,0/0/0,22/6/40,11/0/26,25/6/47,58/36/77,2/1/3,0/0/0,2/0/5,49/29/68,43/25/61,6/0/16,52/45/58,72/68/75,63/57/68,24/10/40}
\PReFlowTaskAgg{ag}{57}{57/53/60,0/0/0,1/0/2,0/0/0,0/0/0,0/0/2,4/1/8,3/1/6,16/11/20,24/19/29,3/1/4,2/1/3,1/0/2,18/14/22,12/8/16,1/0/3,14/12/15,36/32/39,41/36/45,12/8/15}
\PReFlowTaskGroup{hm}
\PReFlowTaskRow{task1}{99}{38/27/50,26/21/30,49/47/52,34/21/49,18/16/21,30/27/34,8/1/17,55/52/59,87/77/93,88/83/92,49/32/64,0/0/0,86/84/88,40/30/49,32/20/46,27/13/43,15/11/20,90/87/92,99/98/100,59/45/70}
\PReFlowTaskRow{task2}{100}{91/80/98,78/75/82,69/62/76,95/91/98,44/40/48,78/75/82,99/98/100,93/91/95,96/94/97,96/94/98,91/88/93,39/30/48,92/90/95,97/96/99,98/97/99,99/98/100,38/31/44,96/95/98,100/99/100,98/96/100}
\PReFlowTaskRow{task3}{100}{83/62/98,28/20/35,75/72/79,96/95/98,20/18/23,78/65/87,0/0/1,62/38/85,92/85/96,95/92/97,36/22/50,0/0/0,93/91/95,96/93/98,95/94/97,68/53/80,28/24/30,95/93/97,100/100/100,68/51/82}
\PReFlowTaskRow{task4}{84}{37/20/54,3/1/5,22/18/25,14/4/26,1/0/2,23/19/28,0/0/0,2/1/3,43/34/52,31/23/39,0/0/0,0/0/0,60/55/65,3/0/6,0/0/0,0/0/0,9/6/11,56/48/66,84/64/95,0/0/0}
\PReFlowTaskRow{task5}{100}{96/94/98,59/49/68,83/79/86,99/98/100,36/32/42,89/88/91,100/99/100,98/97/100,99/98/99,98/97/99,90/86/93,68/61/74,98/96/99,99/98/100,99/98/100,99/98/100,51/44/56,99/98/100,99/98/100,99/99/100}
\PReFlowTaskAgg{hm}{96}{69/65/74,39/37/41,60/58/62,68/63/73,24/22/26,60/57/62,42/40/43,62/57/68,83/81/85,82/79/84,53/48/58,22/20/23,86/85/87,67/64/69,65/62/68,59/54/63,28/26/30,87/86/89,96/92/99,65/60/69}
\PReFlowTaskGroup{hl}
\PReFlowTaskRow{task1}{76}{36/29/43,0/0/0,5/1/9,8/5/11,0/0/0,2/1/4,0/0/1,2/1/4,0/0/1,10/7/12,14/8/20,8/6/9,36/31/41,6/3/9,8/4/12,6/2/13,0/0/0,12/8/18,76/71/80,6/2/9}
\PReFlowTaskRow{task2}{4}{1/0/2,0/0/0,0/0/0,0/0/0,0/0/0,0/0/0,0/0/0,0/0/0,0/0/0,0/0/0,0/0/0,0/0/0,0/0/0,0/0/0,0/0/0,0/0/0,0/0/0,0/0/0,4/2/6,2/1/4}
\PReFlowTaskRow{task3}{55}{32/23/41,0/0/1,6/3/9,19/15/23,1/0/1,11/9/12,4/2/6,18/14/21,0/0/0,16/12/20,1/0/2,3/2/5,55/50/59,16/9/24,19/14/23,4/3/6,6/3/9,18/12/24,37/30/45,29/21/37}
\PReFlowTaskRow{task4}{47}{10/4/16,0/0/0,6/2/11,9/6/13,0/0/0,5/4/6,8/5/12,4/2/6,0/0/0,2/1/3,0/0/0,1/0/2,2/1/3,16/12/20,12/8/17,0/0/0,3/1/4,6/4/8,46/39/52,47/35/57}
\PReFlowTaskRow{task5}{40}{7/3/12,0/0/0,11/6/17,9/5/13,0/0/0,5/3/6,16/6/27,9/6/11,1/0/2,2/0/4,2/0/4,1/0/3,29/15/42,19/11/26,19/12/27,0/0/0,1/0/2,5/3/7,40/35/46,35/20/48}
\PReFlowTaskAgg{hl}{40}{17/15/19,0/0/0,5/4/8,9/7/11,0/0/0,5/4/5,6/3/8,6/5/8,0/0/0,6/5/7,3/2/5,3/2/3,24/21/27,11/9/14,12/9/14,2/1/3,2/1/3,8/7/10,40/38/43,24/20/28}
\PReFlowTaskGroup{scene}
\PReFlowTaskRow{task1}{100}{98/96/99,66/54/77,100/100/100,99/99/100,62/57/67,79/73/85,100/100/100,100/100/100,100/99/100,100/100/100,100/100/100,95/92/97,93/91/95,100/100/100,100/100/100,100/100/100,100/100/100,100/99/100,100/99/100,100/99/100}
\PReFlowTaskRow{task2}{100}{90/87/93,80/73/87,99/98/100,71/65/76,15/11/18,88/79/94,99/98/100,98/97/100,99/98/100,78/74/82,100/100/100,97/95/98,63/58/69,98/97/100,99/99/100,98/97/99,100/98/100,100/100/100,67/60/73,100/99/100}
\PReFlowTaskRow{task3}{100}{51/42/59,41/20/60,99/97/100,97/96/98,14/12/17,23/20/27,92/90/95,91/88/94,79/74/84,97/96/98,97/95/99,61/53/68,71/66/75,100/99/100,95/92/98,100/99/100,90/87/93,96/92/100,94/91/96,97/96/98}
\PReFlowTaskRow{task4}{100}{60/48/71,49/26/72,96/94/98,92/90/94,71/66/77,0/0/1,6/1/11,86/71/96,8/2/15,92/89/94,100/100/100,35/28/43,98/97/99,100/99/100,93/86/99,100/99/100,52/25/78,99/98/100,98/97/99,97/96/99}
\PReFlowTaskRow{task5}{100}{27/11/43,17/7/31,96/95/97,33/29/37,27/22/33,0/0/1,74/60/84,66/63/70,53/41/62,25/22/28,100/100/100,24/19/30,95/93/97,87/84/90,88/82/93,88/86/90,22/3/47,92/89/96,76/70/80,89/88/90}
\PReFlowTaskAgg{scene}{99}{65/61/69,50/43/57,98/97/99,78/77/80,38/35/41,38/36/40,74/72/76,88/85/91,68/65/70,78/77/80,99/99/100,62/60/65,84/83/85,97/96/98,95/92/97,97/96/98,72/66/80,97/96/98,87/85/89,97/96/97}
\PReFlowTaskGroup{p33}
\PReFlowTaskRow{task1}{100}{99/98/100,1/0/2,26/9/47,99/98/100,8/6/9,87/77/94,100/100/100,98/95/99,98/96/100,89/72/100,83/58/100,100/100/100,100/100/100,98/95/100,98/96/100,100/100/100,100/100/100,100/100/100,100/100/100,100/100/100}
\PReFlowTaskRow{task2}{100}{77/61/91,0/0/0,75/51/100,79/58/96,2/1/4,55/38/70,100/100/100,92/83/98,66/43/89,85/68/97,83/58/100,100/100/100,100/100/100,100/100/100,100/100/100,100/100/100,100/100/100,100/100/100,100/100/100,100/100/100}
\PReFlowTaskRow{task3}{100}{85/81/89,0/0/0,78/54/98,83/58/100,1/0/2,24/17/33,100/100/100,87/79/94,54/35/72,18/5/35,83/58/100,98/97/99,100/100/100,100/100/100,100/100/100,100/100/100,100/100/100,100/100/100,100/100/100,100/100/100}
\PReFlowTaskRow{task4}{100}{62/45/77,0/0/0,92/80/100,84/62/100,1/0/2,25/18/30,100/100/100,85/74/94,72/66/78,87/83/90,83/58/100,97/94/99,100/100/100,100/100/100,100/98/100,100/100/100,100/100/100,100/100/100,100/100/100,100/100/100}
\PReFlowTaskRow{task5}{100}{70/50/87,1/0/2,9/1/26,6/2/9,1/0/3,47/37/58,100/100/100,88/80/96,51/28/74,9/2/19,100/100/100,100/99/100,100/100/100,100/99/100,100/100/100,100/100/100,100/100/100,100/100/100,100/100/100,100/100/100}
\PReFlowTaskAgg{p33}{100}{79/73/84,0/0/1,56/48/64,70/60/78,3/2/3,48/40/55,100/100/100,90/83/96,68/62/75,57/52/63,87/82/92,99/98/100,100/100/100,100/99/100,99/99/100,100/100/100,100/100/100,100/100/100,100/100/100,100/100/100}
\PReFlowTaskGroup{p44}
\PReFlowTaskRow{task1}{85}{0/0/0,29/19/39,0/0/0,17/8/28,0/0/0,56/33/78,0/0/0,0/0/0,0/0/0,0/0/0,0/0/0,71/58/83,0/0/0,0/0/0,20/12/28,85/77/92,0/0/1,0/0/1,74/38/98,80/58/93}
\PReFlowTaskRow{task2}{50}{0/0/0,13/6/20,0/0/0,1/0/2,0/0/1,12/3/25,0/0/0,0/0/0,0/0/0,0/0/0,1/0/2,13/3/26,1/0/1,0/0/1,5/1/9,7/2/12,0/0/1,0/0/0,11/0/31,50/28/71}
\PReFlowTaskRow{task3}{82}{0/0/0,16/8/25,0/0/0,3/1/7,0/0/0,41/21/61,0/0/0,0/0/0,0/0/0,0/0/0,0/0/0,39/26/53,0/0/1,0/0/0,3/1/5,56/48/64,0/0/0,1/0/2,34/9/62,82/70/92}
\PReFlowTaskRow{task4}{65}{0/0/0,6/2/10,0/0/1,2/0/5,0/0/0,11/0/26,0/0/0,0/0/0,0/0/0,0/0/0,0/0/0,14/4/25,0/0/0,0/0/0,4/1/9,29/18/41,0/0/1,0/0/0,23/3/49,65/47/79}
\PReFlowTaskRow{task5}{54}{0/0/0,13/4/25,0/0/0,2/0/6,0/0/1,3/0/8,0/0/0,0/0/0,0/0/0,0/0/0,0/0/0,32/20/45,0/0/0,0/0/1,2/0/3,19/8/31,0/0/1,0/0/1,19/1/43,54/33/72}
\PReFlowTaskAgg{p44}{66}{0/0/0,15/12/19,0/0/0,5/3/7,0/0/0,24/16/33,0/0/0,0/0/0,0/0/0,0/0/0,0/0/0,34/30/37,0/0/0,0/0/0,6/5/8,39/35/43,0/0/0,0/0/1,32/21/43,66/58/74}
\PReFlowTaskGroup{c2}
\PReFlowTaskRow{task1}{96}{29/24/35,0/0/0,84/80/88,81/77/86,8/7/10,55/51/60,50/45/55,62/58/67,36/32/40,80/75/85,90/88/92,77/73/81,16/14/18,85/80/89,84/80/88,89/86/92,96/94/98,56/46/66,43/39/46,88/84/93}
\PReFlowTaskRow{task2}{87}{6/3/10,0/0/0,49/40/56,46/40/52,0/0/1,39/32/46,46/41/49,52/46/58,35/30/40,29/25/33,87/82/91,28/21/34,12/10/14,79/72/86,84/80/88,77/71/83,86/81/90,32/27/36,24/18/30,55/49/62}
\PReFlowTaskRow{task3}{85}{2/1/5,0/0/0,38/31/46,42/35/49,0/0/0,44/36/50,50/46/55,52/49/56,31/26/36,33/29/38,85/82/89,44/36/52,10/8/12,54/47/61,59/51/66,55/47/62,82/76/88,20/18/22,12/10/13,59/50/68}
\PReFlowTaskRow{task4}{35}{1/0/2,0/0/0,8/5/12,10/8/12,0/0/0,13/10/16,11/8/13,18/16/21,16/13/18,5/3/8,32/27/37,14/11/16,4/2/6,22/18/25,18/16/21,22/16/28,35/32/38,10/7/13,4/2/6,32/27/36}
\PReFlowTaskRow{task5}{83}{4/2/7,0/0/0,56/46/64,50/43/56,0/0/1,42/36/48,48/44/52,41/36/46,56/51/61,16/12/19,76/72/80,39/33/44,11/8/14,82/78/85,81/79/82,83/80/86,75/70/80,64/58/71,10/7/12,61/56/66}
\PReFlowTaskAgg{c2}{75}{9/8/10,0/0/0,47/44/50,46/43/49,2/2/2,38/36/41,41/39/43,45/43/47,35/33/36,33/31/34,74/72/76,40/37/43,11/10/12,64/62/66,65/63/67,65/63/68,75/73/76,37/34/39,18/17/20,59/56/62}
\PReFlowTaskGroup{c3}
\PReFlowTaskRow{task1}{40}{4/2/5,2/0/3,14/8/19,15/10/19,0/0/1,39/33/46,40/34/46,40/33/46,24/17/31,26/21/31,7/3/10,11/8/13,2/1/2,14/10/17,11/9/14,16/12/22,19/15/22,12/8/16,10/7/13,32/25/40}
\PReFlowTaskRow{task2}{2}{0/0/0,0/0/0,0/0/1,0/0/1,0/0/0,0/0/0,0/0/0,0/0/0,0/0/1,1/0/2,0/0/0,0/0/1,0/0/0,1/0/2,0/0/0,2/1/4,2/0/3,0/0/1,2/1/4,0/0/2}
\PReFlowTaskRow{task3}{4}{0/0/0,0/0/0,3/1/5,1/0/1,0/0/0,1/0/1,0/0/1,0/0/0,0/0/0,0/0/1,0/0/1,0/0/1,0/0/0,2/2/3,2/0/3,2/1/4,4/1/6,2/1/4,0/0/1,0/0/1}
\PReFlowTaskRow{task4}{2}{0/0/0,0/0/0,0/0/0,0/0/0,0/0/0,0/0/0,0/0/0,0/0/0,0/0/0,0/0/1,0/0/0,0/0/1,0/0/0,0/0/1,0/0/0,2/1/3,0/0/1,0/0/0,0/0/1,0/0/1}
\PReFlowTaskRow{task5}{2}{0/0/0,0/0/0,0/0/0,0/0/0,0/0/0,0/0/0,0/0/0,0/0/0,0/0/0,0/0/0,0/0/0,0/0/0,0/0/0,0/0/0,0/0/0,0/0/0,0/0/0,0/0/1,2/1/3,0/0/0}
\PReFlowTaskAgg{c3}{8}{1/0/1,0/0/1,3/2/5,3/2/4,0/0/0,8/7/9,8/7/9,8/7/9,5/3/6,6/5/6,1/1/2,2/2/3,0/0/0,3/3/4,3/2/3,5/4/6,5/4/6,3/2/4,3/2/4,7/5/8}
\PReFlowTaskGroup{c4}
\PReFlowTaskRow{task1}{90}{35/23/48,0/0/0,0/0/1,11/4/22,0/0/0,0/0/0,2/2/3,2/1/3,12/3/25,90/87/92,9/6/12,19/8/30,8/5/11,13/8/18,67/54/79,32/19/47,16/10/23,50/32/67,72/54/86,59/46/70}
\PReFlowTaskRow{task2}{85}{6/1/12,0/0/0,0/0/0,0/0/0,0/0/0,0/0/0,0/0/0,0/0/0,1/0/1,0/0/0,1/0/1,2/0/4,0/0/0,0/0/0,1/0/2,0/0/0,0/0/0,23/13/34,85/73/94,1/0/2}
\PReFlowTaskRow{task3}{70}{3/1/4,0/0/0,0/0/0,0/0/0,0/0/0,0/0/0,0/0/1,0/0/0,0/0/0,0/0/0,1/0/2,2/1/3,2/1/5,1/0/2,3/1/5,0/0/0,0/0/1,1/0/2,70/61/77,15/9/21}
\PReFlowTaskRow{task4}{19}{0/0/0,0/0/0,0/0/0,0/0/0,0/0/0,0/0/0,0/0/0,0/0/0,0/0/0,4/1/7,0/0/0,1/0/2,0/0/0,0/0/0,0/0/0,0/0/0,0/0/0,0/0/0,19/10/28,0/0/0}
\PReFlowTaskRow{task5}{0}{0/0/0,0/0/0,0/0/0,0/0/0,0/0/0,0/0/0,0/0/0,0/0/0,0/0/0,0/0/0,0/0/0,0/0/0,0/0/0,0/0/0,0/0/0,0/0/0,0/0/0,0/0/0,0/0/0,0/0/0}
\PReFlowTaskAgg{c4}{49}{9/6/11,0/0/0,0/0/0,2/1/5,0/0/0,0/0/0,1/0/1,0/0/1,3/1/5,19/18/19,2/2/3,5/3/7,2/1/3,3/2/4,14/11/17,6/4/9,3/2/5,15/11/19,49/44/53,15/12/17}
\bottomrule
\end{tabular}}
\endgroup
\caption{
\textbf{Per-task offline success rates (\%) after 1M gradient steps.}
Each cell shows the mean over seeds, with a 95\% confidence interval in small gray brackets above it.
\texttt{agg.} rows report the mean across the five tasks.
\best{Best results in each row are highlighted}, including ties.
}
\label{tab:task_offline}
\end{table}

\begin{table}[p]
\centering
\begingroup
\newcommand{\PReFlowTaskBest}[1]{{%
  \setlength{\fboxsep}{0.4pt}%
  \colorbox{background}{#1}}}
\renewcommand{\PReFlowTaskMean}[2]{%
  \def\PReFlowtmp{#2}\def\PReFlowdash{-}%
  \ifx\PReFlowtmp\PReFlowdash
    \text{--}%
  \else
    \ifnum#1=#2\relax
      \PReFlowTaskBest{#2}%
    \else
      #2%
    \fi
  \fi
}
\def\PReFlowTaskCell#1#2/#3/#4\PReFlowCellStop{%
  \if\relax\detokenize{#3}\relax
    \vphantom{\PReFlowTaskBest{0}}\PReFlowTaskMean{#1}{#2}%
  \else
    \shortstack[c]{%
      {\fontsize{5.3}{5.8}\selectfont\textcolor{gray}{[#3,#4]}}%
      \\[-1.4pt]\vphantom{\PReFlowTaskBest{0}}\PReFlowTaskMean{#1}{#2}}%
  \fi
}
\def\PReFlowTaskCells#1#2,#3\PReFlowListStop{%
  &\PReFlowTaskCell{#1}#2\PReFlowCellStop
  \if\relax\detokenize{#3}\relax
  \else
    \PReFlowTaskCells{#1}#3\PReFlowListStop
  \fi
}
\renewcommand{\PReFlowTaskRow}[3]{%
  \texttt{\textbf{#1}}\PReFlowTaskCells{#2}#3,\PReFlowListStop\\
}
\renewcommand{\PReFlowTaskAgg}[3]{%
  \texttt{\textbf{agg.}}\PReFlowTaskCells{#2}#3,\PReFlowListStop\\
}
\newcommand{\PReFlowTaskGroup}[1]{%
  \addlinespace[2pt]
  \multicolumn{21}{@{}l@{}}{%
    {\color{gray}\ttfamily\bfseries #1}}\\[-1pt]
}
\newcommand{\PReFlowTaskHead}[1]{%
  {\fontsize{7.0}{8.0}\selectfont\ttfamily\bfseries #1}%
}
\fontsize{8.0}{9.0}\selectfont
\setlength{\tabcolsep}{5.5pt}
\renewcommand{\arraystretch}{1.0}
\maxsizebox*{\linewidth}{0.86\textheight}{%
\begin{tabular}{@{}l*{20}{c}@{}}
\toprule
\PReFlowTaskHead{Task} & \PReFlowTaskHead{ReBRAC} & \PReFlowTaskHead{FBRAC} & \PReFlowTaskHead{BAM} & \PReFlowTaskHead{FQL} & \PReFlowTaskHead{FAWAC} & \PReFlowTaskHead{CGQL} & \PReFlowTaskHead{CGQL-M} & \PReFlowTaskHead{CGQL-L} & \PReFlowTaskHead{DAC} & \PReFlowTaskHead{QSM} & \PReFlowTaskHead{DSRL} & \PReFlowTaskHead{FEdit} & \PReFlowTaskHead{IFQL} & \PReFlowTaskHead{QAM} & \PReFlowTaskHead{QAM-F} & \PReFlowTaskHead{QAM-E} & \PReFlowTaskHead{TRQAM} & \PReFlowTaskHead{Q-Flow} & \PReFlowTaskHead{RQL} & \PReFlowTaskHead{PReFlow} \\
\midrule
\PReFlowTaskGroup{al}
\PReFlowTaskRow{task1}{100}{98/98/99,95/87/100,97/96/98,100/99/100,2/0/5,99/98/100,90/87/92,84/81/86,96/94/97,97/94/99,96/94/97,97/96/98,65/45/82,98/97/99,96/94/97,98/96/99,98/96/99,99/98/100,86/82/88,99/98/100}
\PReFlowTaskRow{task2}{97}{93/90/95,72/48/94,68/62/73,97/95/98,1/0/1,83/80/86,46/41/50,38/34/42,77/71/82,73/61/84,70/63/76,89/87/91,74/70/78,94/92/96,78/74/82,90/88/92,89/87/91,94/91/96,82/75/89,95/93/96}
\PReFlowTaskRow{task3}{100}{100/99/100,91/74/100,99/98/100,100/100/100,61/56/66,98/97/100,96/95/98,94/89/97,100/99/100,100/100/100,98/98/99,100/100/100,87/84/89,100/100/100,99/98/100,100/99/100,100/99/100,100/100/100,100/99/100,100/100/100}
\PReFlowTaskRow{task4}{100}{98/98/99,92/77/100,94/92/96,100/99/100,20/18/22,96/94/98,89/88/90,87/83/90,98/97/99,99/99/100,90/87/92,96/95/97,83/80/86,97/96/99,94/92/97,96/94/97,99/98/100,99/98/100,94/92/96,98/96/99}
\PReFlowTaskRow{task5}{100}{99/98/100,98/96/99,96/95/98,100/99/100,23/18/28,96/95/98,90/87/92,90/88/93,99/98/100,99/99/100,94/92/96,97/96/98,73/58/83,99/99/100,97/96/98,97/95/99,99/98/100,99/97/100,88/85/90,99/98/100}
\PReFlowTaskAgg{al}{99}{98/97/98,89/83/95,91/90/92,99/99/99,21/20/23,94/94/95,82/81/83,79/77/80,94/93/95,94/91/96,90/88/91,96/95/96,76/71/81,98/97/98,93/92/94,96/95/97,97/96/97,98/97/99,90/88/91,98/98/99}
\PReFlowTaskGroup{ag}
\PReFlowTaskRow{task1}{96}{80/64/89,64/39/86,25/7/46,89/72/99,0/0/0,92/90/94,2/0/3,34/24/46,54/38/69,39/21/59,25/13/38,90/86/93,0/0/0,14/10/20,70/60/78,89/72/98,90/86/95,33/27/38,23/12/34,96/94/98}
\PReFlowTaskRow{task2}{98}{96/94/98,79/56/96,0/0/0,97/96/99,0/0/0,92/91/94,88/86/90,74/70/79,95/94/96,98/98/99,16/5/29,87/76/95,6/3/9,76/72/79,90/88/93,98/98/99,96/95/98,59/44/73,96/94/98,97/95/99}
\PReFlowTaskRow{task3}{98}{19/10/28,77/55/94,0/0/0,87/70/97,0/0/0,55/40/66,14/6/25,2/0/3,94/90/96,98/97/99,1/0/1,65/53/77,0/0/0,3/2/5,9/7/11,93/86/97,82/75/88,6/4/8,50/40/60,83/64/95}
\PReFlowTaskRow{task4}{100}{81/62/96,88/73/96,14/0/35,94/84/99,0/0/0,94/93/96,90/83/95,87/83/90,96/94/98,100/100/100,66/51/77,93/89/96,2/0/4,75/72/77,91/89/93,98/96/99,93/91/95,79/75/84,26/8/45,97/96/98}
\PReFlowTaskRow{task5}{99}{97/95/98,95/92/97,17/0/42,98/96/99,0/0/0,93/91/95,93/90/95,87/84/89,99/98/100,99/98/100,93/90/96,97/96/98,79/75/82,97/95/98,98/98/99,96/94/98,98/97/100,95/93/97,97/95/99,99/98/100}
\PReFlowTaskAgg{ag}{95}{75/69/79,81/72/88,11/5/18,93/88/97,0/0/0,85/82/88,57/55/60,57/54/59,87/84/91,87/83/91,40/36/45,86/83/90,17/16/18,53/52/54,72/70/74,95/91/97,92/90/94,54/51/58,58/54/63,94/91/97}
\PReFlowTaskGroup{hm}
\PReFlowTaskRow{task1}{98}{14/4/29,27/22/33,53/48/58,74/62/86,22/17/28,44/40/49,80/60/96,82/74/88,87/80/93,84/78/88,68/49/84,75/55/92,39/30/48,41/22/60,74/56/88,63/39/85,73/59/82,98/96/99,98/97/100,84/63/99}
\PReFlowTaskRow{task2}{100}{98/96/99,91/89/93,93/91/95,99/98/100,57/52/62,88/85/92,100/99/100,96/94/98,97/96/98,98/97/99,81/69/91,81/58/99,56/46/66,99/98/99,100/100/100,100/99/100,94/91/97,99/98/100,100/99/100,97/92/100}
\PReFlowTaskRow{task3}{100}{74/50/99,44/35/53,86/84/89,98/96/100,24/20/26,92/91/94,88/69/100,97/96/98,99/98/100,99/98/100,85/76/92,79/56/98,42/32/54,94/83/100,97/92/100,91/75/100,74/66/81,99/98/100,99/98/100,100/99/100}
\PReFlowTaskRow{task4}{98}{4/0/9,6/3/10,25/22/28,65/46/81,4/2/5,40/33/46,90/73/99,40/35/44,56/47/64,19/12/26,53/38/68,68/44/89,51/44/58,82/66/93,57/36/76,60/37/82,44/40/50,83/79/86,98/97/99,97/91/100}
\PReFlowTaskRow{task5}{100}{99/98/100,78/63/89,97/95/99,98/95/100,50/42/57,92/91/94,100/100/100,99/98/100,99/98/100,99/98/100,96/94/98,100/100/100,69/57/80,99/98/100,100/99/100,100/100/100,97/95/99,100/99/100,100/99/100,100/99/100}
\PReFlowTaskAgg{hm}{99}{58/52/63,49/46/53,71/70/72,87/82/91,31/29/34,71/70/73,92/86/97,83/81/84,88/86/90,80/78/81,77/71/82,80/72/88,51/47/56,83/78/88,85/80/90,83/75/90,76/73/79,96/95/96,99/98/99,95/91/99}
\PReFlowTaskGroup{hl}
\PReFlowTaskRow{task1}{83}{44/26/62,4/2/6,0/0/0,11/6/17,0/0/0,1/0/2,17/5/32,9/5/13,25/12/39,0/0/0,45/27/62,19/12/28,1/0/2,4/1/6,13/6/22,7/1/17,4/1/7,17/14/20,60/54/66,83/76/88}
\PReFlowTaskRow{task2}{25}{1/0/2,0/0/0,0/0/1,0/0/0,0/0/0,0/0/0,0/0/0,0/0/0,0/0/0,0/0/0,25/13/37,0/0/0,0/0/1,0/0/0,0/0/0,4/0/12,0/0/1,0/0/0,4/2/8,10/2/18}
\PReFlowTaskRow{task3}{92}{44/28/58,16/12/18,27/10/47,41/36/48,0/0/1,12/9/15,67/51/79,20/17/23,72/65/78,10/7/13,39/21/58,8/1/17,25/16/34,41/28/52,44/34/53,57/35/76,19/13/26,24/18/27,92/89/94,88/65/99}
\PReFlowTaskRow{task4}{65}{20/12/30,0/0/0,20/10/29,15/10/21,0/0/0,7/5/9,44/32/55,7/5/9,43/31/53,0/0/0,20/5/38,10/3/17,7/4/10,31/19/42,22/14/30,6/0/18,6/2/11,7/4/9,50/42/58,65/34/88}
\PReFlowTaskRow{task5}{80}{7/1/14,0/0/0,22/13/32,10/6/16,0/0/0,4/3/6,40/24/56,7/4/10,41/30/52,0/0/0,16/2/32,3/0/6,4/2/6,27/22/32,20/11/28,8/0/23,4/1/10,5/3/6,47/42/52,80/77/84}
\PReFlowTaskAgg{hl}{65}{23/18/28,4/3/5,14/10/19,16/13/18,0/0/0,5/4/6,34/28/39,8/7/10,36/32/40,2/1/3,29/22/36,8/5/11,7/5/9,21/17/24,20/16/23,16/11/22,7/5/9,10/9/12,51/48/53,65/58/72}
\PReFlowTaskGroup{scene}
\PReFlowTaskRow{task1}{100}{100/100/100,100/100/100,100/100/100,100/100/100,100/100/100,100/99/100,100/100/100,100/100/100,100/100/100,100/100/100,100/100/100,100/100/100,100/100/100,100/100/100,100/100/100,100/100/100,100/100/100,100/100/100,100/100/100,100/100/100}
\PReFlowTaskRow{task2}{100}{100/100/100,100/100/100,94/84/100,100/100/100,100/99/100,99/99/100,99/98/100,98/96/99,100/99/100,100/100/100,100/100/100,100/100/100,100/99/100,100/100/100,100/100/100,100/100/100,100/100/100,100/100/100,100/100/100,100/100/100}
\PReFlowTaskRow{task3}{100}{99/98/100,100/100/100,100/100/100,100/99/100,69/50/86,93/90/96,98/97/99,77/73/81,100/99/100,100/100/100,100/100/100,100/100/100,100/99/100,100/100/100,100/100/100,100/100/100,100/100/100,100/99/100,100/100/100,100/100/100}
\PReFlowTaskRow{task4}{100}{99/98/100,99/98/100,98/97/99,100/99/100,100/99/100,72/68/75,90/86/93,60/54/66,94/91/97,98/96/99,100/100/100,98/97/99,98/97/99,100/99/100,99/98/100,99/98/100,99/97/100,98/97/100,89/84/93,98/97/99}
\PReFlowTaskRow{task5}{100}{96/94/98,94/89/98,100/99/100,58/52/64,95/92/97,60/51/68,75/69/80,2/1/4,93/91/95,12/8/15,100/99/100,95/93/97,85/82/88,100/100/100,99/99/100,100/99/100,98/96/100,98/97/100,99/98/100,96/94/98}
\PReFlowTaskAgg{scene}{100}{99/98/99,99/98/100,98/96/100,92/90/93,93/89/96,85/83/87,92/91/94,67/66/69,97/96/98,82/81/83,100/100/100,99/98/99,96/96/97,100/100/100,100/99/100,100/100/100,99/99/100,99/99/100,98/97/98,99/98/99}
\PReFlowTaskGroup{p33}
\PReFlowTaskRow{task1}{100}{100/100/100,100/100/100,100/100/100,100/100/100,100/99/100,100/100/100,100/100/100,100/100/100,91/86/95,99/98/100,83/58/100,100/100/100,100/100/100,100/100/100,100/100/100,100/100/100,100/100/100,100/100/100,100/100/100,100/100/100}
\PReFlowTaskRow{task2}{100}{100/100/100,92/75/100,92/75/100,100/100/100,4/2/5,100/100/100,100/100/100,100/100/100,42/32/54,99/98/100,83/58/100,100/100/100,100/100/100,100/100/100,100/100/100,100/100/100,100/100/100,100/100/100,100/100/100,100/100/100}
\PReFlowTaskRow{task3}{100}{100/100/100,100/100/100,100/100/100,100/100/100,1/0/2,100/100/100,100/99/100,100/100/100,15/10/21,72/57/84,83/58/100,100/100/100,100/99/100,100/100/100,100/100/100,100/100/100,100/100/100,100/100/100,100/100/100,100/100/100}
\PReFlowTaskRow{task4}{100}{100/100/100,100/100/100,84/59/100,100/100/100,0/0/1,100/100/100,88/85/91,100/100/100,21/15/29,84/77/90,83/58/100,100/100/100,98/97/100,100/100/100,100/100/100,100/100/100,100/100/100,100/100/100,100/100/100,100/100/100}
\PReFlowTaskRow{task5}{100}{100/100/100,84/62/100,34/10/59,100/100/100,1/0/2,100/100/100,100/100/100,100/100/100,21/14/28,77/65/88,100/100/100,100/100/100,100/100/100,100/100/100,100/100/100,100/100/100,100/100/100,100/100/100,100/100/100,100/100/100}
\PReFlowTaskAgg{p33}{100}{100/100/100,95/90/100,82/74/88,100/100/100,21/21/21,100/100/100,97/97/98,100/100/100,38/35/42,86/82/90,87/78/95,100/100/100,100/99/100,100/100/100,100/100/100,100/100/100,100/100/100,100/100/100,100/100/100,100/100/100}
\PReFlowTaskGroup{p44}
\PReFlowTaskRow{task1}{100}{0/0/0,100/100/100,0/0/0,100/100/100,0/0/1,100/100/100,42/20/64,39/19/60,0/0/0,100/99/100,0/0/0,100/100/100,0/0/0,0/0/0,100/100/100,100/100/100,0/0/1,0/0/1,100/100/100,100/100/100}
\PReFlowTaskRow{task2}{100}{0/0/0,100/100/100,0/0/0,100/100/100,0/0/1,92/75/100,17/1/38,35/12/60,0/0/0,99/98/100,0/0/0,100/100/100,0/0/0,8/0/25,100/100/100,100/100/100,0/0/0,0/0/1,100/100/100,100/100/100}
\PReFlowTaskRow{task3}{100}{0/0/0,83/58/100,0/0/0,100/100/100,0/0/1,100/100/100,37/15/63,14/0/33,0/0/0,100/99/100,0/0/0,100/100/100,0/0/0,0/0/1,100/100/100,94/84/100,0/0/0,0/0/1,100/100/100,100/100/100}
\PReFlowTaskRow{task4}{100}{0/0/0,92/75/100,0/0/0,100/100/100,0/0/0,91/74/100,16/0/35,23/5/44,0/0/0,99/98/100,0/0/0,100/100/100,0/0/0,0/0/0,100/100/100,100/100/100,0/0/0,0/0/1,100/100/100,100/100/100}
\PReFlowTaskRow{task5}{100}{0/0/0,83/58/100,0/0/0,100/100/100,0/0/1,92/75/100,32/10/56,13/0/30,0/0/0,91/74/100,0/0/0,100/100/100,0/0/0,8/0/25,100/100/100,100/100/100,0/0/1,0/0/1,100/100/100,100/100/100}
\PReFlowTaskAgg{p44}{100}{0/0/0,92/85/98,0/0/0,100/100/100,0/0/1,95/88/100,29/20/38,25/16/34,0/0/0,98/94/100,0/0/0,100/100/100,0/0/0,3/0/8,100/100/100,99/97/100,0/0/0,0/0/1,100/100/100,100/100/100}
\PReFlowTaskGroup{c2}
\PReFlowTaskRow{task1}{100}{99/98/100,97/94/99,100/99/100,100/100/100,10/4/18,100/100/100,100/100/100,100/100/100,100/100/100,100/100/100,100/99/100,100/100/100,100/100/100,100/99/100,100/100/100,100/99/100,100/100/100,100/99/100,99/98/100,100/100/100}
\PReFlowTaskRow{task2}{100}{99/98/99,96/93/98,99/99/100,100/99/100,2/1/4,99/98/100,98/97/99,100/99/100,100/100/100,99/99/100,98/97/100,100/100/100,97/95/99,100/100/100,100/100/100,100/100/100,100/100/100,100/99/100,98/97/99,100/99/100}
\PReFlowTaskRow{task3}{100}{98/97/99,96/93/98,99/98/100,100/99/100,2/1/2,99/99/100,98/97/100,99/98/100,100/99/100,99/99/100,100/99/100,100/100/100,97/96/98,100/100/100,100/100/100,100/100/100,100/100/100,100/99/100,98/96/99,100/100/100}
\PReFlowTaskRow{task4}{100}{95/92/98,68/56/80,97/95/99,100/100/100,0/0/0,87/77/93,93/90/96,97/95/99,97/96/98,99/98/100,98/96/99,100/99/100,6/4/8,100/99/100,100/99/100,100/100/100,100/99/100,98/98/99,60/51/68,100/100/100}
\PReFlowTaskRow{task5}{100}{94/92/96,41/26/56,97/95/99,100/100/100,15/8/23,99/98/100,98/96/99,99/98/100,99/98/100,94/92/96,99/98/100,100/100/100,97/95/99,100/100/100,100/100/100,100/99/100,100/100/100,100/98/100,99/98/100,100/100/100}
\PReFlowTaskAgg{c2}{100}{97/96/98,79/75/83,98/98/99,100/100/100,6/4/8,97/95/98,98/97/98,99/99/99,99/99/99,98/98/99,99/99/99,100/100/100,79/79/80,100/100/100,100/100/100,100/100/100,100/100/100,99/99/100,91/89/93,100/100/100}
\PReFlowTaskGroup{c3}
\PReFlowTaskRow{task1}{100}{61/37/84,20/4/40,83/58/100,100/99/100,0/0/0,100/99/100,100/100/100,99/99/100,100/100/100,99/99/100,1/1/2,85/64/100,0/0/1,100/100/100,98/95/100,100/100/100,100/100/100,47/19/78,90/87/94,100/100/100}
\PReFlowTaskRow{task2}{98}{9/0/24,1/0/2,71/49/91,61/38/84,0/0/0,0/0/0,0/0/1,1/0/2,8/4/13,14/8/20,0/0/0,2/0/5,0/0/0,67/43/89,65/42/86,98/97/99,90/78/96,28/6/52,34/15/52,96/94/97}
\PReFlowTaskRow{task3}{99}{0/0/1,1/0/1,57/44/66,81/78/84,0/0/0,0/0/0,0/0/0,0/0/0,9/5/12,11/5/17,0/0/0,81/77/85,0/0/0,96/94/98,88/85/92,99/98/100,93/91/96,36/22/52,36/21/50,92/88/96}
\PReFlowTaskRow{task4}{99}{0/0/1,0/0/1,50/29/70,56/42/68,0/0/0,0/0/0,0/0/0,1/0/2,7/5/10,9/7/11,0/0/0,31/13/51,0/0/0,88/71/98,89/85/94,99/99/100,94/89/98,15/6/28,4/0/11,89/86/93}
\PReFlowTaskRow{task5}{26}{0/0/0,0/0/0,0/0/0,0/0/0,0/0/0,0/0/0,0/0/0,0/0/0,0/0/0,0/0/0,0/0/0,0/0/0,0/0/0,0/0/0,0/0/1,0/0/0,1/0/3,0/0/2,26/17/34,0/0/0}
\PReFlowTaskAgg{c3}{79}{14/9/20,5/1/9,52/44/60,60/54/65,0/0/0,20/20/20,20/20/20,20/20/20,25/24/26,27/25/28,0/0/0,40/34/45,0/0/0,70/64/76,68/64/73,79/79/79,76/73/78,25/17/34,38/33/43,75/74/77}
\PReFlowTaskGroup{c4}
\PReFlowTaskRow{task1}{100}{100/100/100,98/96/100,100/100/100,100/100/100,0/0/0,72/52/86,81/62/96,96/94/98,100/100/100,100/100/100,100/100/100,100/100/100,100/99/100,100/100/100,100/100/100,100/100/100,100/100/100,100/100/100,100/100/100,100/100/100}
\PReFlowTaskRow{task2}{100}{100/100/100,59/35/80,85/78/90,100/100/100,0/0/0,0/0/1,52/37/66,66/53/77,62/52/70,50/29/70,97/96/98,99/98/100,85/80/91,80/64/90,99/98/100,100/99/100,97/95/99,100/99/100,100/100/100,100/100/100}
\PReFlowTaskRow{task3}{100}{100/99/100,48/25/70,85/77/92,99/98/100,0/0/0,0/0/0,43/32/52,48/36/58,78/72/83,92/88/95,95/93/96,99/98/100,64/58/70,93/90/96,97/96/99,97/95/99,93/92/94,100/99/100,99/98/100,100/100/100}
\PReFlowTaskRow{task4}{100}{98/94/100,51/31/72,50/24/75,99/99/100,0/0/0,0/0/0,0/0/0,0/0/1,57/49/65,93/89/96,97/96/98,100/100/100,54/46/63,82/72/90,99/99/100,99/98/100,92/89/96,100/98/100,100/99/100,100/100/100}
\PReFlowTaskRow{task5}{1}{0/0/0,0/0/0,0/0/0,0/0/0,0/0/0,0/0/0,0/0/0,0/0/0,0/0/0,0/0/0,1/0/2,0/0/0,0/0/0,0/0/0,0/0/0,0/0/0,0/0/0,0/0/0,0/0/0,0/0/0}
\PReFlowTaskAgg{c4}{80}{80/79/80,51/44/59,64/59/69,80/79/80,0/0/0,14/10/17,35/30/40,42/39/45,59/56/62,67/62/71,78/77/78,80/79/80,61/58/63,71/67/74,79/79/80,79/79/80,76/76/77,80/80/80,80/79/80,80/80/80}
\bottomrule
\end{tabular}}
\endgroup
\caption{
\textbf{Per-task performance after 500K online environment steps
(success rate, \%).}
Data sources, seed counts, confidence intervals, and highlighting
follow Table~\ref{tab:task_offline}.
\best{Best results in each row are highlighted}, including ties.
}
\label{tab:task_online}
\end{table} 

\FloatBarrier
\begin{figure}[p]
    \centering
    \includegraphics[width=0.96\linewidth,height=0.88\textheight,keepaspectratio]{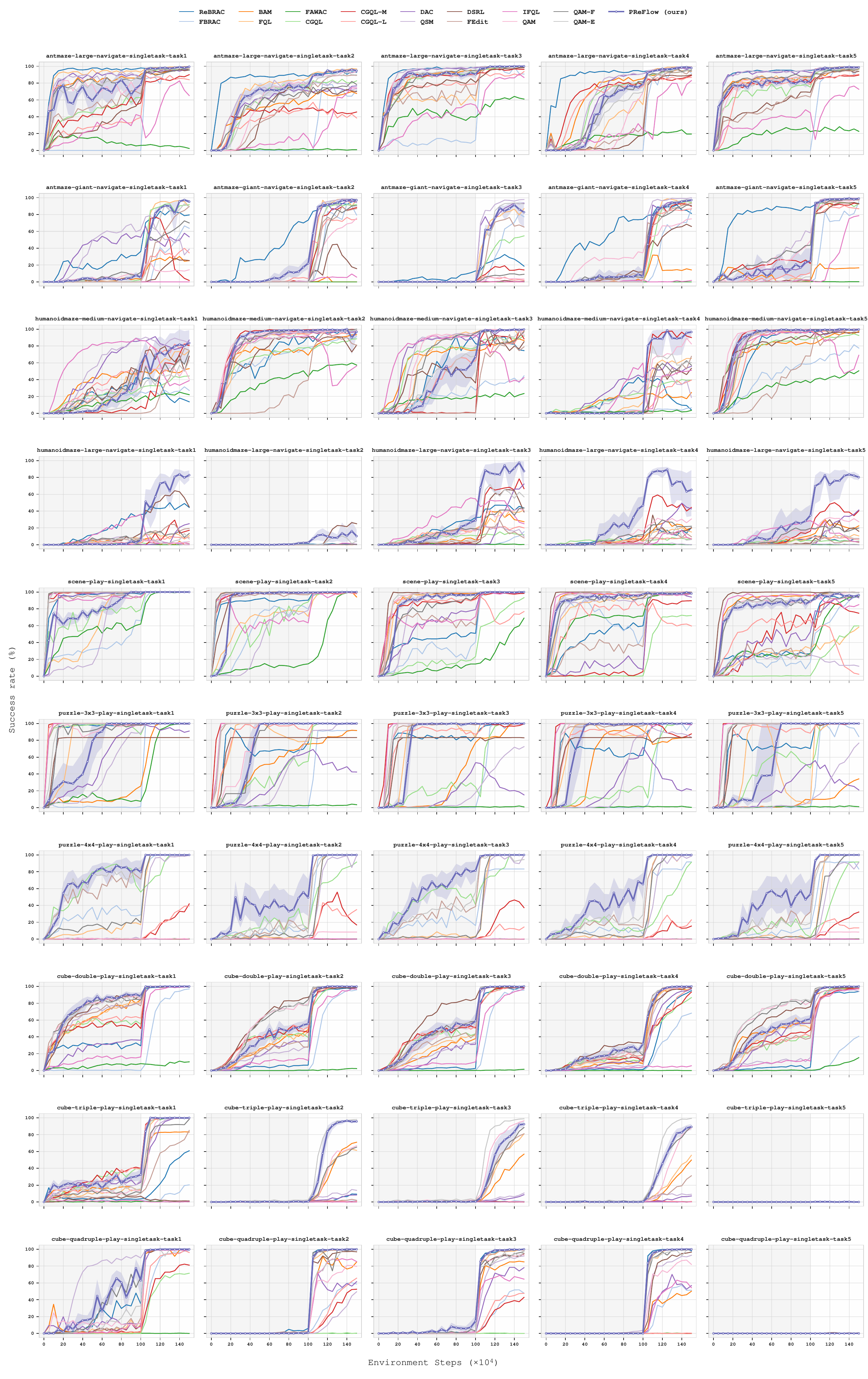}
    \caption{
\textbf{Full training curves on the 50 OGBench tasks.}
Online fine-tuning begins after 1M offline gradient steps and
continues for 500K environment steps.
}
    \label{fig:full_training_curves}
\end{figure}